\def\university{Technische Universit{\"a}t M{\"u}nchen}
\def\doctype{Master's Thesis}
\def\title{Modelling Compositionality and Structure Dependence in Natural Language}
\def\author{Karthikeya Ramesh Kaushik}
\def\Advisorone{Dr.\ Andrea E Martin}
\def\date{September 30, 2020}
\def\keywords{{keyword1}, {keyword2}, {keyword3}}
\def\metaTitle{\title}
\def\metaAuthor{\author}
\def\metaSubject{\doctype\ -\ \university}
\def\metaKeywords{\keywords}
\def\footertext{}
\definecolor{Pantone300C}{HTML}{0065BD} 
\definecolor{Pantone301}{HTML}{005293}  
\definecolor{Pantone540}{HTML}{003359}  
\definecolor{DarkGray}{HTML}{333333}    
\definecolor{MediumGray}{HTML}{808080}  
\definecolor{LightGray}{HTML}{CCCCC6}   
\definecolor{Pantone7527}{HTML}{DAD7CB} 
\definecolor{Pantone158}{HTML}{E37222}  
\definecolor{Pantone383}{HTML}{A2AD00}  
\definecolor{Pantone283}{HTML}{98C6EA}  
\definecolor{Pantone542}{HTML}{64A0C8}  
\def\colorLinks{Pantone300C}
\def\colorUrl{Pantone542}
\def\colorCitations{Pantone158}
\newtheorem{theorem}{Theorem}[chapter]
\newtheorem{lemma}[theorem]{Lemma}
\newtheorem{definition}[theorem]{Definition}
\newtheorem{example}[theorem]{Example}
\newcommand{\clearemptydoublepage}{%
  \ifthenelse{\boolean{@twoside}}{\newpage{\pagestyle{empty}\cleardoublepage}}%
  {\clearpage}}
\newglossaryentry{computer}
{
  name=computer,
  description={is a programmable machine that receives input,
               stores and manipulates data, and provides
               output in a useful format}
}
\begin{document}

 \frontmatter

 \def\bcorcor{0.15cm}
\addtolength{\hoffset}{\bcorcor}

\thispagestyle{empty}

\vspace{12cm}
\begin{center}
	{\huge \textbf \title}\\
	\vspace{15mm}
	{\LARGE  \author}\\
	\vspace{10mm}
	{\Large Advised by \Advisorone}\\
	\vspace{\fill}
\end{center}






\clearemptydoublepage
\phantomsection
\addcontentsline{toc}{chapter}{Abstract}

\vspace*{2cm}
\begin{center}
{\Large \textbf{Abstract}}
\end{center}
\vspace{1cm}

Human beings possess the most sophisticated computational machinery in the known universe. We can understand language of rich descriptive power, and communicate in the same environment with astonishing clarity. Two of the many contributors to the interest in natural language - the properties of Compositionality and Structure Dependence, are well documented, and offer a vast space to ask interesting modelling questions. The first step to begin answering these questions is to ground verbal theory in formal terms. Drawing on linguistics and set theory, a formalisation of these ideas is presented in the first half of this thesis. We see how cognitive systems that process language need to have certain functional constraints, viz. time based, incremental operations that rely on a structurally defined domain. The observations that result from analysing this formal setup are examined as part of a modelling exercise. Using the advances of word embedding techniques, a model of relational learning is simulated with a custom dataset to demonstrate how a time based role-filler binding mechanism satisfies some of the constraints described in the first section. The model's ability to map structure, along with its symbolic-connectionist architecture makes for a cognitively plausible implementation. The formalisation and simulation are together an attempt to  recognise the constraints imposed by linguistic theory, and explore the opportunities presented by a cognitive model of relation learning to realise these constraints. 

 \tableofcontents

 \mainmatter


\addtolength{\evensidemargin}{-12mm}

%
%
\chapter{Introduction}
\label{chapter:Introduction}
\section{The Essential Ingredients of Language}
The potential to comprehend natural language is a hallmark of human cognitive ability. Many believe that the faculty for complex thought is tied closely with this ability \cite{martin_modelling_2020}\cite{fodor_connectionism_1988}\cite{frankland_concepts_2020}. Therefore, the modelling of such a system that is tied to what we understand about the formal aspects of language, is an important step towards understanding human cognition itself. \\

Perhaps the most interesting feature of language is the unbounded creativity it offers. Starting from a finite set of symbols, with a primitive set of rules, one can generate a discrete infinity of different structures. Recursion in language is defined as ``the property of a finitely specified generative procedure that allows an operation to reapply to the result of an earlier application of the same operation" \cite{everaert_structures_2015}. \\

A key feature of language that makes this powerful generative procedure interesting is Compositionality. It is the principle that constrains the relation between form and meaning, by requiring that the meaning of a complex expression be built up from the meanings of its constituent expressions and the way they are combined. Importantly, compositionality keeps the meanings of the elements intact, only affecting how the whole complex is interpreted, as a function of the elements and their ordering. \\

While it is true that these properties are essential to language, they can exist independently of each other in other domains. This is because compositionality is the property of units that a system possesses, whereas recursion is the result of functions that are applied on the units that constitute a system. A simple example of a recursive property is displayed by the operations used to generate the Fibonacci sequence. But nothing can be said about compositionality exhibited by the numbers of the Fibonacci series. For an example of recursion in language, take two individuals Tim and Ram, conversing about Sanskrit - Tim says ``I know Sanskrit'', to which Ram replies ``I know you know Sanskrit'', to which Tim replies, ``I know you know I know Sanskrit'' and so on. It is not hard for us to see why, due to the recursive power of language, the conversation does not ever need to end.\\

`Hierarchy' is a loaded term, and associated with close, but not identical interpretations in different fields. An important distinguishing feature of hierarchy is the representational tokening of a complex expression. A natural example is the modern democratic society, where increasing populations of people are represented by officials with proportional responsibilities. Language is analogous in the following way. Oftentimes the most direct way of identifying a constituent is to see if a group of words can be substituted with another word and check if the grammaticality of the sentence remains unaffected \cite{pagin_compositionality_2010} . These substitutions act as tokens encoding proportionally greater amounts of information. In a democracy, these tokens represent groups of people, while in natural language, these tokens represent information. Take for instance the sentence - ``Big dogs bite men''.  ``Big dogs'' can be combined and replaced by ``They'', and the sentence, ``They bite men'' remains grammatically correct. On the other hand, combining ``dogs bite'' to form a constituent results in an awkward combination, and we are left without a clue about what role ``Big'' plays here. \\

This process of combination is termed Merge in linguistics, and is defined as - the computational operation that constructs a new syntactic object Z (e.g.,‘ate the apples’) from already constructed syntactic objects X(‘ate’),Y(‘the apples’), without changing X or Y \cite{everaert_structures_2015}. Notice how even when ``Big dogs'' is called a constituent, the words ``Big'' and ``dogs'' keep their meanings intact. The principled way of analysing constituent structure in language is based on the rules of Syntax, which allows us to connect meaning with form \cite{adger_syntax_2015}. The process of Merge is considered an essential feature of Universal Grammar (UG), which is the genetic capacity for language exhibited by all humans. Interesting to note is how Merge in its basic formulation produces unordered objects, and is an attempt to reconcile word orders of various languages with a common, internal representation, not tied to the modalities of externalisation such as speech and sign language.\\

As mentioned by Pagin and Westerstahl in \cite{pagin_compositionality_2010}, there are multiple flavours of compositionality. The weakest version depends only on the atomic elements and the function that uses them in scope. In the second level version, total meaning depends upon the meanings and operations of the intermediate parts. The weak version relies only upon the immediate sub-parts and the syntactic function combining them. In this work, the aim is to bind ``grammatical meaning" with time, all the while staying true to cognitive processing constraints.

\section{Cognitive Modelling and Processing Compositionality}

In their 1976 paper, Marr and Poggio \cite{marr_understanding_1976} laid out a theoretical framework for the investigation of cognitive processes. In it, they argued for three levels of analysis that should be considered in any cognitive science research - computation, algorithm, and implementation. The first, computational analysis, relates the underlying processes being studied to their mathematical expressions. Second, the algorithmic description involves making clear how the mathematical form relates to the procedural capabilities or mechanisms at hand. The third level consists of the actual implementation of these algorithms on hardware. According to them, each level of analysis should be kept independent, and that the computational level, although often most neglected, should be given primary importance.\\

Guest and Martin \cite{guest_how_2020} argue for formal modelling as an essential step in the process of scientific inference. Most psychological research takes the path of considering a hypothesis and collecting and reporting data relevant to the hypothesis. On the other hand, scientists trained in formal methods alone tend to construct models that are not tied to experimental observations. Using the best of both worlds, by \textit{specifying and modelling} the essential assumptions of a theory will only help guide and effectively constrain further exploratory analyses. \\

Since the primary intention of this work was to connect formal theory and implementation, choosing a suitable cognitive architecture was important. Modern Natural Language Processing techniques are capable of generating text from a given topic, creating captions from images, understanding human speech, and much more, with tremendous accuracy \cite{devlin_bert_2019}. Starting from the days of the simple Perceptron \cite{press_perceptrons_nodate}, we now have large scale neural networks with millions of parameters, that take billions of data points to get to human-like behavior \cite{silver_mastering_2017}. One problem that ails such large neural network models is their inherent lack of explainability. More often than not, such models are treated as black boxes whose measure of improvement relies on small nudges in a large hyperparameter space that is hard to examine in a principled fashion \cite{noauthor_gpt-3_nodate}. An even bigger problem in these models is the lack of an explicit notion of a \emph{Variable}. \\ 

We have another class of models, referred to as symbolic systems, that rely on structured knowledge representations, with modeller intuition guiding the rules that manipulate or transform these representations. Such symbolic systems are known to be powerful but inflexible, and heavily reliant on human intuition.\\

In his book, ``Words and Rules'', Steven Pinker \cite{pinker_words_2011} writes about the existence of a continuum of cognition, one end of which is populated by generative systems that rely on well-defined variables and functions to manipulate them. On the other end exist complex associationist networks that rely on distributed computation capable of modelling non-linear behavior. While Artificial Neural Networks work well to encode statistical regularities of data, they are not suitable proxies of cognitive processes that require compositionality. Recurrent Neural Networks (RNNs) use tensor manipulation operations to represent the internal states of a system. This pays rich dividends when the goal is to minimise a loss function defined on the vector space, but comes at the cost of not knowing where or what each state representation is made of. \cite{martin_tensors_2020}\cite{martin_compositional_2020}.\\

\section{A brief overview of DORA}

The Discovery of Relations through Analogy (DORA) model \cite{doumas_theory_2008}, is a derivative of the earlier LISA (Learning and Inference with Schemas and Analogies) \cite{hummel_distributed_nodate}, and belongs to a class of models termed symbolic-connectionist. DORA was originally intended as a model of human analogical reasoning and possesses a neural network architecture. DORA's learning and analogical inference abilities are very close to what is known from human developmental psychology\cite{doumas_theory_2008}. Starting from a flat n-dimensional encoding of symbols, DORA learns structured representations by a Self Supervised Learning (SSL) algorithm. The simplest unit of abstraction is the predicate/object unit, both identical in function. The details of how DORA learns predicates for relational learning tasks are well documented in \cite{doumas_theory_2008}, but the problem of learning predicates for sentences in natural language is an open one, and we use DORA only to understand the mechanisms of structural mapping and representation. DORA uses dynamic binding to encode propositions. Each proposition is represented by an array of Role-filler binding (RB unit) units that contain information about the functional predicate and the object it is composed of \ref{DORA-arch}. Activation is passed between the Propositional layer and the RB units. These RB units are also bidirectionally connected to the predicate and object units (whose binding each RB represents). The Predicate and Object units (PO units) pass activation to the semantic layer which can be thought to represent perceptual input/output encoders. From the perspective of this thesis, DORA offers an essential advantage over other models by keeping the separation between predicate and object roles in a relationship intact, thereby preserving compositional structure.\\

\begin{figure}
    \centering
    \includegraphics[scale=0.75]{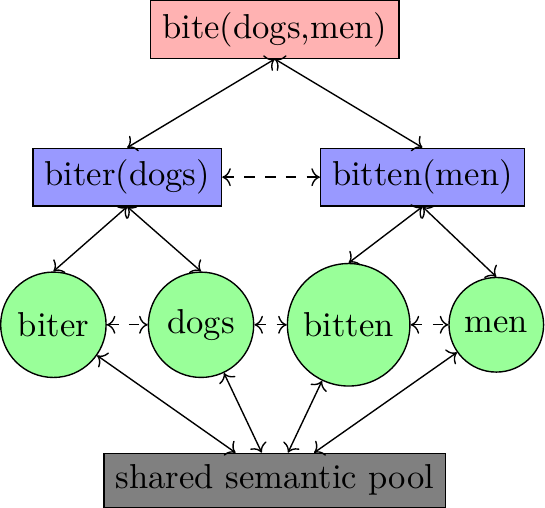}
    \caption{DORA representation of the proposition - bites(dogs,men) in predicate logic. Red unit belongs to P layer, blue - RB layer, green - PO layer, gray - semantic layer}
    \label{DORA-arch}
\end{figure}

This work is an attempt to clearly define the assumptions that we make about the compositional nature of language, and what it means for language to possess grammaticality and structure dependence in chapter \ref{chapter:body1}. In chapter \ref{chapter:body2}, the emphasis is on how the constraints imposed on the system in chapter \ref{chapter:body1} can be realised in a cognitive model, borrowing concepts from Machine Learning (ML) and Natural Language Processing (NLP) Techniques. In chapter \ref{chapter:resultsanddiscussion}, we see the results of simulations run on DORA using a custom dataset, review the limitations of the project, and discuss possible paths for the future.

\chapter{Formalisation}
\label{chapter:body1}
We begin this chapter by going through a few basic mathematical concepts, and then apply them to ideas from formal linguistics.

\section{Basic definitions}

Set Theory serves to make our intuitions concrete and communicable. It is therefore a good starting point in all modelling exercises to define clearly what we are setting out to explore, and the tools we will be using in the process. Set refers to a collection of objects, and Function refers to a way of sending every element of one Set to elements of another Set.

\begin{definition}
Let $X$, $Y$ be Sets. The Coproduct of $X$ and $Y$ is defined as the disjoint union of $X$ and $Y$, $X + Y$ where every element of $X + Y$ belongs either to $X$ or $Y$. If something is an element of both X and Y we include both copies and include information about where the element originates from. 

The Product of $X$ and $Y$ is defined as the set of all ordered pairs $(x,y)$, $X\times Y$ where $x\in X$ and $y\in Y$.
\label{Coproduct_def}
\end{definition}

\subsection{Pushout}

\begin{definition}
An Equivalence relation on $X$ is a subset $Y \subseteq X\times X$ such that elements of $Y$ satisfy the properties of 
\begin{enumerate}
    \item Reflexivity : $(x,x) \in Y, \forall x\in X$
    \item Symmetry : $(x_{1},x_{2}) \in Y \Rightarrow (x_{2},x_{1}) \in Y$
    \item Transitivity : $(x_{1},x_{2}), (x_{2},x_{3}) \in Y \Rightarrow (x_{1},x_{3}) \in Y$
\end{enumerate}
The equivalence relation is indicated by the symbol $\sim$ taken to mean if $x_{1}\sim x_{2}$, then $(x_{1},x_{2})\in Y$. \label{Equivalence-relation_def}
\end{definition}

\begin{definition}
Given the sets X, Y and Z, there is a mapping from $Z$ to the sets $X$ and $Y$ via the functions $f$ and $g$, respectively. The fiber sum or the Pushout is the quotient of $X\cup Z\cup Y$, by the equivalence relation $\sim$. For a pushout, an equivalence relation is generated by $x \sim f(z)$ and $y \sim g(z)$ for all $z \in Z$. \\

The Pushout is denoted by $X\cup_{Z} Y = X\cup Z\cup Y / \sim$. \\

The Pushout $P$ is defined as the set obtained by taking the disjoint union $X + Y$, and finding the elements $x \in X$ and $y \in Y$ such that there exists a $z \in Z$ where $f(z) = x$ and $g(z) = y$.
\label{Pushout_def}
\end{definition}

 The functions $i_{1}$ and $i_{2}$ are the canonical inclusion functions which map the elements of $X$ and $Y$ to their equivalence classes. These maps $i_{1}$ and $i_{2}$ result in a commutative square, such that $f \circ i_{1} = g \circ i_{2}$. This allows a definition of equivalence relation on $X\cup Z\cup Y$, and $P$ is the quotient of it \cite{spivak_ologs_2012}\cite{phillips_sheavinguniversal_2020}.

\begin{example}
An undirected graph G is defined as a collection of Vertices and Edges, $G = (V,E)$. A vertex $v$ is reachable from a vertex $u$ if there is a path from $u$ to $v$. Reachability is then an equivalence relation :
\begin{enumerate}
    \item Reflexivity is satisfied, since every vertex is reachable from itself.
    \item If there is a path from $u$ to $v$, there is also a path from $v$ to $u$ since the graph is undirected. Therefore symmetry is satisfied.
    \item If there is a path from $u$ to $v$ and a path from $v$ to $w$, the two paths can be joined so that a path from $u$ to $w$ exists. Therefore transitivity holds
\end{enumerate}

The quotient of reachability on $V$ gives us the set of connected components, or cliques, in $G$.
\end{example}

\subsection{Universal Property of Pushouts}

\begin{definition}
The universal property of Pushouts states that given any commutative square with a pushout such as the one shown in fig \ref{univ_pushout}, and given another set $Q$ for which the mapping from $Z$ to $Q$ commutes ie., $f \circ j_{1} = g \circ j_{2}$, there must exist a unique $P \to Q$, also making the diagram commute, as shown in fig \ref{univ_pushout}. 
\end{definition}

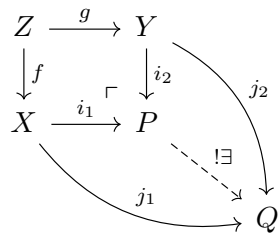
\begin{figure}[h!]
    \centering
    \begin{tikzcd}
    Z \arrow[r, "g"] \arrow[d, "f"] 
    &Y \arrow[d,"i_{2}"] \arrow[ddr, bend left,"j_{2}"]\\
    X \arrow[r, "i_{1}"] \arrow[drr, bend right,"j_{1}"]
    &P \arrow[ul, phantom, "\ulcorner", very near start] \arrow[dr, dashed,"!\exists"] \\
    &&Q
    \end{tikzcd}
    \caption{Universal property of pushouts}
    \label{univ_pushout}
\end{figure}

In this thesis, we consider the universal property on partial functions, and the composition of partial functions is not the same as the composition of (total) functions. For example, the mapping $X \to Q$ can be a partial function in that it does not map every element of $X$ to an element in $Q$, but only a subset of $X$ has an image in $Q$. The necessity for partial functions when dealing with compositionality is also stressed in \cite{pagin_compositionality_2010}, although the reason for such a distinction is that grammatical understanding may be imperfect or incomplete. In our case, we treat the existence of the partial functions as a fundamental necessity to account for novel sequences. Under such a condition, we can reformulate the composition of functions in terms of inclusions and restrictions of (total) functions as shown below \cite{simmons_introduction_nodate}. 

\subsubsection{Treatment of partial functions}

Given three sets $S,T,H$, and a partial function $f:S\to T$ and another partial function $h:T\to H$. That is, not all elements of $S$ have an image in $T$, and not all elements of $T$ have an image in $H$. To represent a partial function as a total function, we consider subsets $S^{'}\subset S$, $T^{'}\subset T$, such that $\overline{f}:S^{'}\to T$, $\overline{h}:T^{'}\to H$ are full functions, as shown in fig \ref{def_pfn}. 

\begin{figure}[h]
    \centering
    \begin{tikzcd}
    S \arrow[r,"f"] &T &&T \arrow[r,"h"] &H\\
    S^{'} \arrow[u,hook] \arrow[ur,"\overline{f}",labels=below right] &&&T^{'} \arrow[u,hook] \arrow[ur,below,"\overline{h}",labels=below right]
    \end{tikzcd}
    \caption{Partial functions defined on subsets}
    \label{def_pfn}
\end{figure}
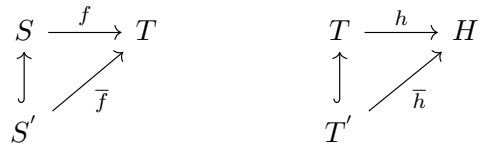

We now take a subset $U\subseteq S^{'}$ such that if $s\in U \iff s\in S^{'}$ and $\overline{f}(s)\in T^{'}$, as in fig \ref{comp_pfn}. These functions are now composition compatible, and we take $h\circ f$ to be determined by $\overline{h \circ f} = \overline{h} \circ \overline{f_{|U}}$. That is, the composition of partial functions is defined as a restriction on total functions. This exercise will not be explicitly shown in later sections due to space constraints. In the following sections, compositions are taken to mean compositions of total functions determined from partial functions.

\begin{figure}[h]
    \centering
    \begin{tikzcd}
    S \arrow[r,"f"] &T \arrow[r,"h"] &H\\
    S^{'} \arrow[u,hook] \arrow[ur,"\overline{f}",labels=below right] &T^{'} \arrow[u,hook] \arrow[ur,"\overline{h}",labels=below right] \\
    U \arrow[u,hook] \arrow[ur,"\overline{f_{|U}}",labels=below right]
    \end{tikzcd}
    \caption{Partial functions composed}
    \label{comp_pfn}
\end{figure}
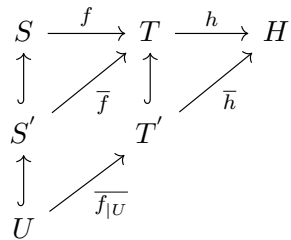

One condition that results from considering such a composition of partial functions is that although commutativity holds, the uniqueness condition is not guaranteed. Therefore, from the stronger ``unique existence" from $P$ to $Q$, we have ``existence" only, giving us the ``weak pushout''. 

\section{Specifications}

\begin{definition}
Let $L_{0}$ be the set of all words in a natural language $Language_{k}$. 
\label{vocabulary}
\end{definition}

\begin{definition}
$S$ = \{$w + v | w, v \in L_{0}\cup S$\} The + operator denotes usual string concatenation, in addition to a space in between. 
\label{sequences}
\end{definition}

To make understanding easy, we start with a small subset of English words. \\

$L_{0}$ = \{`Men', `Dogs', `bite', `say', `big'\} \\

We combine words at random order to get sequences of all lengths. Ex :"Dogs dogs men men", ``Dogs men bite'', ``Big men bite dogs'', ``Dogs say men bite men'', and so on. Recursion is taken care of in \ref{sequences} because we combine elements of $S$ to generate new elements of $S$. Use of the verb `say' lets us create grammatical sentences of arbitrarily increasing lengths (In linguistics, a complementizer is a word that turns a clause into the subject or object of a sentence, and strictly speaking, one must use the complementizer ``that" with ``say", but for the sake of simplicity, without sacrificing grammaticality, we do not). Example - ``Dogs say big dogs say men say big dogs bite men'' etc.,\\

\begin{definition}
Merge is an essential binary operation that exists in language. Merge takes two syntactic objects as input, and constructs a third object which is a complex whose existence does not destroy the meaning of the parts it is made up of. Simply put, Merge(X,Y) gives a set \{X,Y\}.  \cite{everaert_structures_2015}.
\label{Merge_def}
\end{definition}

To incorporate constituents into the formulation, we designate embedded constituent sets by $L_{i}$, which is defined as 
\begin{definition}
Given that $f_{1}$ is the syntactic projection on words which identifies its lexical category ($f_{1}(big) = Adjective$ etc.)
\[   
    L_{i} = 
         \begin{cases}
           \text{\{$l + h$ $ | $ $l \in L_{m}, h \in L_{n}$; $max(m,n)$ $\leq (i-1)$ \}} &\quad\text{if i $>$ 1}\\
           \text{\{$f_{1}(l)$ $|$ $l \in L_{0},$ and $f_{1}$ is the syntactic projection on  $l$\}} &\quad\text{if i $=$ 1} \\
         \end{cases}
    \]
\label{constituent_def}
\end{definition}

In \ref{constituent_def}, `+' is the Merge operator on two strings, resulting in a new, valid constituent. With this operation, for  i $>$ 1, every $L_{i}$ consists of two constituents, at least one of which belongs to $L_{i-1}$. For a detailed example, see \ref{detailed_1}.\\

For our purpose, we only consider $L_{i}$ where $i > 1$ in further steps, since $i=1$ does not include any compositional information. In this formalisation, $L_{i}$ should be read as ``Constituent set i''.  The creation of elements of $L_{1}$ from $L_{0}$ is a linguistic projection, which assigns to words syntactic categories that lets them bind with other constituents.\\

Let us take the example of a sentence from S - ``Dogs say big dogs bite men'', whose constituent tree structure (see \cite{adger_syntax_2015} for a neat introduction) is as shown in fig \ref{tree}. Deep structure (the underlying logical relationships of the elements of a phrase or sentence), of the kind seen in \ref{tree} is a result of recursive Merge operations performed on the sequence of words. We see that `bite' combines with `men' to form the verb phrase `bite men', which belongs to $L_{2}$, and $L_{2}$ also contains the constituent ``big dogs''. These two constituents combined, result in elements of $L_{3}$ according to the rules defined above. This takes place until we reach the root of the tree, which is the full sentence, and also a member of $L_{5}$.\\

\begin{figure}[h]
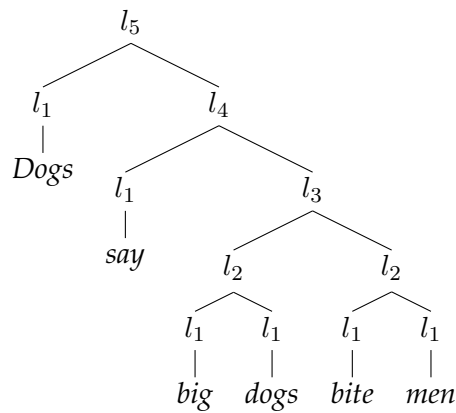

\Tree[.$l_{5}$ [.$l_{1}$ \textit{Dogs} ]
            [.$l_{4}$ [.$l_{1}$ \textit{say} ]
                  [.$l_{3}$ 
               [.$l_{2}$  [.$l_{1}$ \textit{big} ][.$l_{1}$ \textit{dogs} ]]
                    [ .$l_{2}$ [.$l_{1}$ \textit{bite} ] 
               [.$l_{1}$ \textit{men} ] ]
                    ]]]
\caption{Tree structure born out of recursive Merge}
\label{tree}
\end{figure}

What constitutes a sentence is the presence of a root node that subsumes all constituents present in the structure. Instances of sequences that possess grammatically valid constituents but cannot be called sentences are shown in \ref{disjoint_trees}. \\

\begin{figure}[h!]
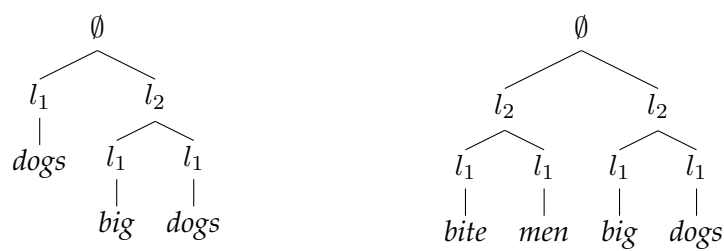

\Tree[.$\emptyset$ [.$l_{1}$ \textit{dogs} ]
            [ .$l_{2}$ [.$l_{1}$ \textit{big} ] 
            [.$l_{1}$ \textit{dogs} ] ]]
\Tree[.$\emptyset$ [.$l_{2}$ [.$l_{1}$ \textit{bite} ] 
                          [.$l_{1}$ \textit{men} ]]
            [ .$l_{2}$ [.$l_{1}$ \textit{big} ] 
            [.$l_{1}$ \textit{dogs} ] ] ]
\caption{Invalid constructions}
\label{disjoint_trees}
\end{figure}

The rules for determining the validity of a constituent are language-specific, and word order plays a part, specifically when considering languages in isolation. We take a South Indian language, Kannada, belonging to the Dravidian language family to make this idea clear. Linguistic typology identifies English as possessing the S(subject)-V(verb)-O(object) order for active forms, and Kannada possesses the S-O-V order. Take a look at how these forms vary in Kannada with the translation of the sentence - "Dogs say big dogs bite men" in \ref{tree_kannada}. \\

\begin{figure}
    \centering
    \includegraphics{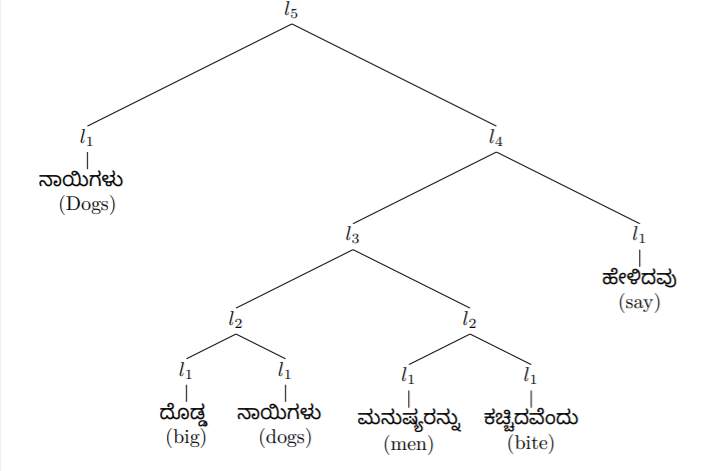}
    \caption{Tree structure in Kannada}
    \label{tree_kannada}
\end{figure}

\section{Grammaticality}

\begin{definition}
$T$ is the set of all positions in a sequence that a constituent can occupy. Therefore $T=\mathbb{N}$
\end{definition}

\begin{definition}
$L^{*}_{i} =  L_{i} \times T = \{ (l, t) | l \in L_{i} , t \in T$\} 
\label{l_i_*}
\end{definition}

\begin{definition}
$P^{*}_{i} = \mathbb{P}(L^{*}_{i}) \times \{0,1\} $, $\mathbb{P}$ is the power set operator. 
\end{definition}

$T$ is the set of all time positions that a constituent can occupy. The product set $L^{*}_{i}$ is defined to associate a constituent with every possible position it can occupy. With \ref{l_i_*}, we already make room for the discrete infinity of recursively created constituents that language allows. The power set of this product is defined as $P^{*}_{i}$ (which includes the $\emptyset$ as its element) to create the possibility of mapping a sequence to the set of constituents it contains. The magic term $\theta$ that appears along with the power set will be described below. \\

Labelling of non-terminal nodes of a syntactic tree is a way to identify compositional structures' content by the abstract lexical features that encode constituents. Primarily, lexical (Noun - N, Verb - V, Adjective - Adj, Preposition - P) and functional (explained under \ref{def_X-Bar})categories encode word-level features, with phrasal categories encoding higher-order combinations (Ex : Noun Phrase - NP, Verb Phrase - VP, Inflectional Phrase - IP, Prepositional phrase - PP etc., ). The label `IP' stands for Inflectional Phrase and is in all respects identical to S, except in that it can be treated as possessing a Head, Complement, and Specifier(which will be explained later). Words can be thought to contain two (approximately) orthogonal dimensions encoding lexical and semantic information. The composition of two words results in a complex, higher-order lexical feature, which is a result of the Merge operation. 

\begin{definition}
$L_{syn}$ = \{N, V, Adj, P, NP, VP, IP, PP\}. The small vocabulary set, $L_{0}$ affords us a small set of syntactic categories to work with. 
\label{L_syn}
\end{definition}

\begin{definition}
Let $s\in S$ and $s = (a,b,c...)$, where $a,b,c... \in L_{0}$\\
$f_{i} : f_{i}(s,f_{i-1}(s),f_{i-2}(s),...f_{2}(s),f_{1}(s)) = (p^{*}_{i},\theta),$ where $p^{*}_{i}\in \mathbb{P}(L^{*}_{i})$, and $s\in S$
\label{def_f}
\end{definition}

The function mapping a sequence $s$ to a level set $P^{*}_{i}$ needs to be able to distinguish between the various configurations that are possible within the constituents that are part of its building blocks (see \ref{detailed_1}). The Binary Branching Hypothesis (BBH) is a minimal assumption that considers every constituent to be capable of dominating at most two other constituents \cite{kayne_antisymmetry_1994}. Given BBH, identifying a constituent structure is influenced by the identity of the two constituents that make it up. Therefore, a principled way of mapping must rely not only on the sequence itself but also on the previously (hierarchically lower) mapped constituents. The functional definition in \ref{def_f} does just that. These functions rely both on the lexical and semantic features encoded by the elements of $s$. It should therefore be assumed that every one of these functions also takes as an argument, implicitly, the lexico-semantic features of $e \in s$ itself.\\

This mapping characterises the bottom-up nature of the perceptual inference of linguistic input in two ways. Firstly, the accumulation of evidence for constituency is time-bound. For example - given an incomplete sequence $s^{'} = (a,b)$, which is a part of $s = (a,b,c...)$, this model is capable of forming constituents using what is known at any given point in time. Secondly, this information needs to build upon what is already known about the computed constituents, thereby constraining the processing structure cognitive models employ. The implementational definition of these functions will be explicitly analysed in the section dealing with DORA in \ref{part:body2}. \\

Consider the function $f_{2}:S\times P^{*}_{1}\to P^{*}_{2} $, which maps every element in the product of S and $P^{*}_{1}$ to the set-of-set-of ordered pairs of level two constituents. For example $f^{'}_{2}$(``Big dogs bite men'') = \{((``Big dogs''),(``bite men'')), 0\}. This means the sentence "Big dogs bite men" has two $L_{2}$ constituents, "Big dogs" and "bite men", which end at the second and fourth position respectively. $\theta$ simply is the Boolean value indicating whether or $p^{*}\in P^{*}_{i}$ is the highest constituent level achievable by $s$. We use $\theta$ in order to determine if the root node is present, and its value dictates grammaticality. In case of \{((``Big dogs''),(``bite men'')), 0\}, the `0' indicates that the root node is absent in the $L_{2}$. \\

\begin{definition}
$T^{*}_{i} = \mathbb{P}(T)\times \mathbb{P}(L_{syn}) \times \{0,1\}$. $\mathbb{P}$ is the power set operator, $L_{syn}$ is as defined in \ref{L_syn}, and $\theta\in \{0,1\}$.
\label{def_t_star}
\end{definition}

The intuition behind this step is to abstract away from a sequence of words to time patterns of constituent formation. This way, we can picture a series of functions that map from S to all the sets $P^{*}_{i}$. We obtain the composition of functions $f_{i} \circ \pi_{i}$ (where $\pi_{i}$ is the natural projection function $P^{*}_{i} \to T^{*}_{i}$) to get from $S$ to $T^{*}$, as shown in fig \ref{StoT}. 

\begin{definition}
$Q = T^{*}_{2}\cup_{S} T^{*}_{3}...\cup_{S} T^{*}_{i}$
\label{def_Q}
\end{definition}

The quotient of the disjoint union of all the $T^{*}_{i}$ is the pushout $Q$ (via the inclusion functions $i_{n}$). Notice that the creation of equivalence classes is due to partial functions. For example, $s$ = "Big dog big dog" has two $L_{2}$ constituents, but $f_{3}(s) = \emptyset$. Therefore, it falls into a different equivalence class in the pushout, which does not contain the sentence ``Big dogs bite men''.  Compositional structures in natural language can be analysed in this manner, and its significance from a cognitive processing perspective will be discussed in \ref{part:resultsanddiscussion}.\\

\begin{example}
 Take the mathematical expression : $((4*(3+2) - 3^2))/4 + 0.1$. Following the rules of evaluating regular arithmetic expressions, we obtain the tree structure as in \ref{math_tree}, (with the nodes of the parsed expression replaced by variables). Notice how we do not need to know what the subexpressions evaluate to, but only need knowledge of the underlying structure to evaluate the overall expression. In this example, and for evaluating arithmetic expressions in general, the knowledge of what symbols such as $/,*,(),$, etc., mean forms the ruleset for determining the structure of the composition.  
\label{math_ex1} 
\end{example}

\begin{figure}[h]
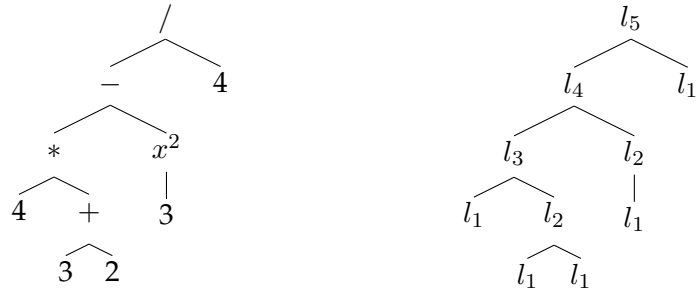

\Tree[.$/$  [.$-$  [.$*$ [.4 ]
                [ .$+$ [.3 ] 
                [.2 ] ]]
                [.$x^{2}$ [.3 ]]]
            [.4 ]]
\Tree[.$l_{5}$  [.$l_{4}$  [.$l_{3}$ [.$l_{1}$ ]
                [ .$l_{2}$ [.$l_{1}$ ] 
                [.$l_{1}$ ] ]]
                [.$l_{2}$ [.$l_{1}$ ]]]
            [.$l_{1}$ ]]
\caption{Evaluating expression in \ref{math_ex1}}
\label{math_tree}
\end{figure}

\begin{definition}
$\mathbb{H}$ = Set of all syntax trees in $Language_{k}$
\end{definition}

Here we define $\mathbb{H}$ as the set of all possible trees generated by English grammar. A branching structure such as the one shown in \ref{tree2} is a result of the knowledge of a language (in this case English), shared by all of its speakers. Therefore, the meaning or value \cite{pagin_compositionality_2010}\cite{hodges_formal_2001} that a speaker or listener ascribes to a sequence of words is dependent not only on surface-level rules, but the complex hierarchical structure generated by constituents. For example, speakers of English have no problem understanding the sentence, ``I watched a gigantic golden armadillo lecture about ballpoint pens", but will unanimously agree that the sentence ``The monkeys of New Delhi is rowdy" is ungrammatical, citing the lack of agreement between the plural ``monkeys" and singular ``is" to be the sticking point. On the other hand, people who are familiar with the idea of Facebook pages will take no issue with, ``The Humans of New York is a fantastic page, filled with heartwarming stories". \\

We can define a mapping from the $T_{i}^{*}$s to $\mathbb{H}$, so that $f_{i} \circ \pi_{i} \circ h_{i}$ commute (see \ref{detailed_1}). This is again a partial function, which takes as input $T_{i}^{*}$, and maps it to a tree in $\mathbb{H}$. This is where the value of $\theta$ comes into play. Every element of $T_{i}^{*}$ which contains the root node (Such that $\theta = 1$) maps to a unique syntactic tree, all elements of $T_{i}^{*}$ where $\theta = 0$ have no corresponding object in $\mathbb{H}$. In the style of \cite{pagin_compositionality_2010}, such a system can be called weakly compositional. Therefore, the topmost syntactic operation (for which $\theta$ is a proxy), along with the structures of its intermediate constituents determines grammaticality. Example - ``Dogs bite men'' has one $L_{3}$ constituent which maps to a binary branching structure as in fig \ref{tree2}.\\

\begin{figure}[h]
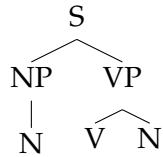

\Tree[.S [.NP [.N ]]
            [ .VP [.V ] 
            [.N ] ]]
\caption{Sample element of $\mathbb{H}$}
\label{tree2}
\end{figure}

The universal property of Pushouts says that there must be a function mapping the pushout $Q$ to $\mathbb{H}$. Note that $|\mathbb{H}| < |Q|$, meaning there are fewer grammatically valid structures than possible ones. As we abstract away from surface order and associate meaning with constituents, we eventually discover that there exist hierarchical structures that are not apparent from linear sequences. In other words, the discovery of grammar is possible given the knowledge of the nature of compositional structures.\\

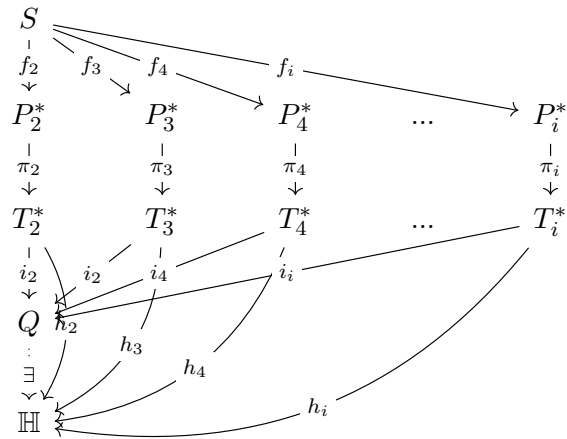
\begin{figure}[h]
\centering
\begin{tikzcd}
S \arrow[d, "f_{2}" description] \arrow[rd, "f_{3}" description] \arrow[rrd, "f_{4}" description] \arrow[rrrrd, "f_{i}" description]\\
P^{*}_{2} \arrow[d, "\pi_{2}" description] &P^{*}_{3} \arrow[d, "\pi_{3}" description] &P^{*}_{4} \arrow[d, "\pi_{4}" description] &... &P^{*}_{i} \arrow[d, "\pi_{i}" description]\\
T^{*}_{2}\arrow[d, "i_{2}" description] \arrow[dd, bend left,"h_{2}" description] &T^{*}_{3}\arrow[ld, "i_{2}" description] \arrow[ldd, bend left,"h_{3}" description] &T^{*}_{4}\arrow[lld, "i_{4}" description] \arrow[lldd, bend left,"h_{4}" description] &... &T^{*}_{i}\arrow[lllld, "i_{i}" description] \arrow[lllldd, bend left,"h_{i}" description]\\
Q \arrow[d, dotted,"\exists" description]\\
\mathbb{H}
\end{tikzcd}
\caption{Universal property of Pushouts applied to $\mathbb{H}$}
\label{StoT}
\end{figure}

\subsubsection{Discovery of structure}
Example \ref{math_ex1} is similar to natural language in the sense that it displays compositional structure. But language is very different because the symbols it uses also act as markers that organise structure, while arithmetic expressions are evaluated by explicitly defined structures. This makes the task of identifying and separating function from variables in language a hard one for cognitive models.\\

The membership of set $\mathbb{H}$ is dictated by the grammatical rules that \textbf{form and constrain} the possible structures exhibited by a language. These rules can be considered a reduction in the complexity of possible structures. The guiding principle of grammar therefore becomes a position and feature-based exclusion of impossible structures. For example, it can be certainly said given merge rules and our vocabulary that $h \notin \mathbb{H}$, where $h$ is shown in \ref{invalid_tree}.\\

\begin{figure}[h]
    \centering
    \includegraphics[scale=0.5]{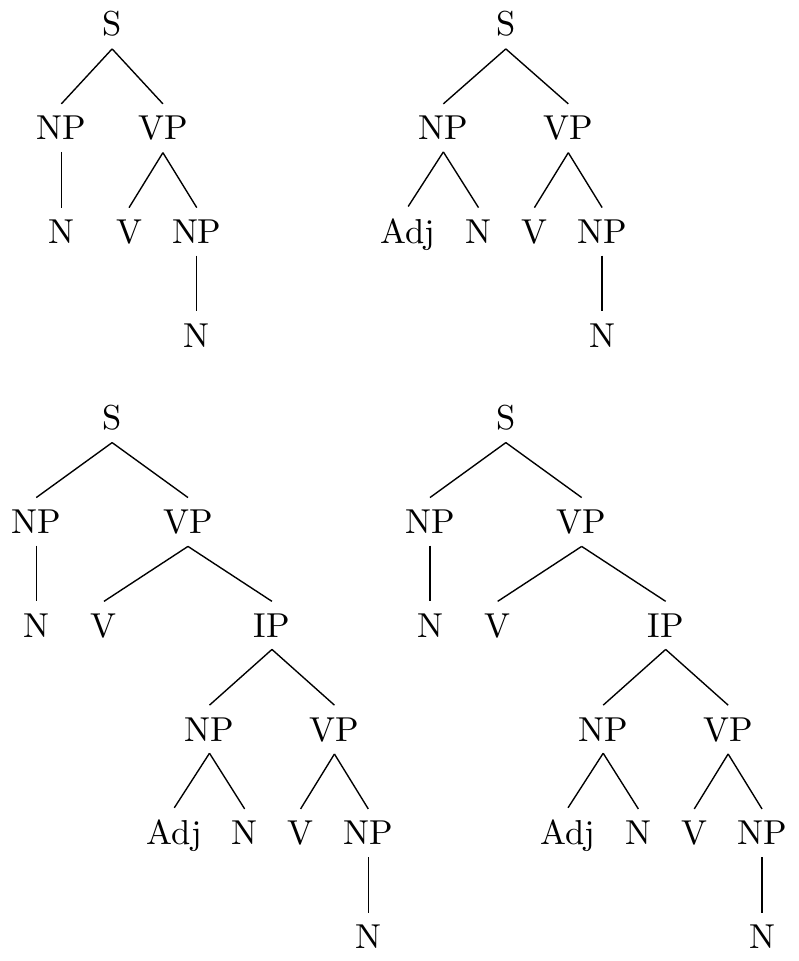}
    \caption{Sampled elements of $\mathbb{H}$}
    \label{fig:h_trees}
\end{figure}

Compositional structures are more informational than linear sequences. Given a linear sequence of $k$ symbols and $n$ slots, we can arrange them in $n^{k}$ ways. But from a binary branching viewpoint, if every slot is considered the leaf node of a tree, we get $C_{n-1}$ number of unique trees for a given sequence of symbols, where $C_{n-1}$ is the Catalan number $C_{n} = \frac{2n!}{n!(n+1)!}$. The rise in complexity of a sequence treated as a linear combination of strings versus its complexity defined by the compositional structure it contains, is $C_{n-1}$. (The sequence of $C_{n} = 1, 1, 2, 5, 14, 42, 132, 429, 1430, 4862, 16796, 58786, 208012$.. for $n = 0, 1, 2, 3 ..$). But the story does not end here, since words possess latent lexical content depending on position. For example, although the word `man' is a noun, if placed in front of `eater', it acquires the characteristics of an adjective, and the phrase becomes a noun phrase, with further compositions distinct from what treating `man' as a noun would have entailed. Therefore, it can be said that a grammar both enriches and restricts the existence of valid structures.

\begin{figure}[h]
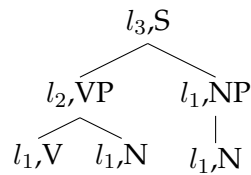

\Tree[.$l_{3}$,S [.$l_{2}$,VP [.$l_{1}$,V ]
              [.$l_{1}$,N ] ]
         [.$l_{1}$,NP [.$l_{1}$,N ] ] ]    
\caption{Grammatical rule as exclusion, $h\notin \mathbb{H}$}
\label{invalid_tree}
\end{figure}

\section{Structure Dependence}
Natural language possesses many distinguishing features that set it apart from other compositional systems. Strong structure dependence is one of them. Compositional systems in general show hierarchy, and use transformations and functions that rely on latent structure rather than purely sequential, operations. The nature of these functions is the subject of discussion in this section. We consider two processes where this property of structure dependence comes to the fore.

\subsection{Interpretation from structure}
Take the example of the sentence ``Dogs bite men with teeth". This sentence can mean one of two things, as shown in \ref{two_interpretations}. In one case, the prepositional phrase (PP) can modify the Noun Phrase (NP) dogs, to give the unambiguous paraphrasing - dogs that have teeth, bite men \ref{interpretation_1}. On the other hand, the prepositional phrase (PP) ``with teeth" can be taken to modify the NP `men', to give ``men with teeth", and the unambiguous sentence - dogs bite men who have teeth \ref{interpretation_2}. \\

This can be compared with evaluating arithmetic operations like $s = 3 * 2 + 9 - 5 / 4$, which can result in 2.5 if evaluated left to right or 13.75 if the order of operations convention (BODMAS - Brackets of - multiplication - addition - subtraction) is adopted. The conventions that guide the evaluation of such expressions are often well defined, and differences are trivially solved by agreeing upon conventions. We cannot assume that all compositional systems display structure dependence of this variety. \\

\begin{figure}[h]
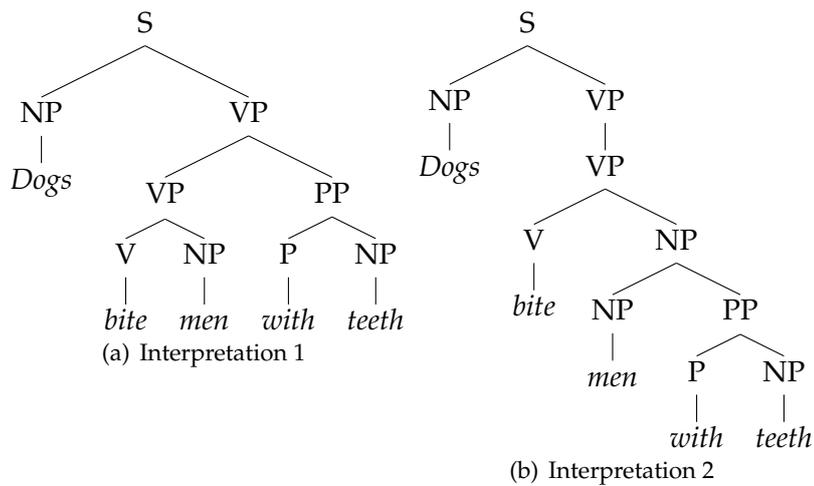

\centering
\subfigure[Interpretation 1]{\label{interpretation_1}
    \Tree[.S    [.NP \textit{Dogs} ]
            [.VP   [.VP    [.V \textit{bite} ]
                            [.NP \textit{men} ]]
                    [.PP    [.P \textit{with} ]
                            [.NP \textit{teeth} ]]]]
            }
\subfigure[Interpretation 2]{\label{interpretation_2}
    \Tree[.S    [.NP \textit{Dogs} ]
            [.VP   [.VP    [.V \textit{bite} ]
                            [.NP    [.NP \textit{men} ]
                                    [.PP    [.P \textit{with} ]
                                            [.NP \textit{teeth} ]]]]]]
            }
\caption{Interpreting ambiguous sentences}
\label{two_interpretations}
\end{figure}

\subsection{Transformation of structure}

In linguistics, Wh-movement refers to the formulation of interrogative sentences and the placement of the Wh words (Who, what, etc., ) to refer to the constituent being queried.\\

This surprising and counter-intuitive phenomenon is often not recognised by non-linguists. Take the example of the sentence - ``I am writing my thesis''. If someone wanted to know the answer to the blank -``I am writing \tikz[baseline]\draw[dashed](0,0)--+(1,0);'', they would ask - ``\underline{What} are you writing?'' \textbf{instead of} ``Are you writing \underline{what}?" - 

\begin{longtable}{|p{3cm}|p{\linewidth-3cm}|}
\hline
Term & Description \\[0.5ex] 
\hline\hline
Merge & It is a binary operation on two syntactic objects X,Y to construct a third object Z, which is simply the set X$\cup$ Y. \\
Syntactic projection (as used here) & The lexical label given to a word\\
Deep Structure &  The underlying logical relationships of the elements of a phrase or sentence\\
Externalisation & The mapping from internal linguistic representations to their ordered output form, either spoken or manually gestured \\
Move & The transformation on Deep Structure which rearranges word order for externalisation\\
Wh-movement & Is a phenomenon where the interrogative word (Who, what, etc.) appears at a different position from the answer.\\
Head & Within a constituent, there is usually a single element which is of primary importance in determining the grammatical behavior of the whole unit. Such a word or phrase is called the head. Ex - In ``The temples of India'', the word ``temples'' determines the agreement on the verb which comes after the phrase, and is, therefore, the head\\
Xbar theory & A structural setup for defining a phrase, which contains a head. The main idea behind the development of this theory is that all phrase structure can be reduced to simple recursive operations on basic configurations containing the head. See \ref{def_X-Bar}\\
Complement & The phrasal constituent which is the sister (Two sister constituents are inputs to a Merge operation) of a head\\
Specifier &  The phrasal constituent which is the sister of the mother (the product of Merge operation on two sisters) of the head\\
\hline
\caption{Commonly used linguistic terms \cite{everaert_structures_2015},\cite{adger_syntax_2015},\cite{chomsky_syntactic_2002}}
\label{linguistic_def}
\end{longtable}

This is a direct example of what is meant by strong structure dependence, where the generation of wh-questions and the position of the wh-word along with its referents are dependent upon the structure being queried, and not tied solely to rigid, surface-level dependencies. We make this distinction formal by identifying the domains of structure-dependent operations and rigid, structure independent operations as in \ref{dependent_vs_independent}. 

\begin{figure}[h]
\centering
\begin{tikzcd}
S \arrow[d, "f_{2}" description] \arrow[rd, "f_{3}" description] \arrow[rrd, "f_{4}" description] \arrow[rrrrd, "f_{i}" description] \arrow[rrrrr, dotted,"g_{si}"] &&&&&G\\
P^{*}_{2} \arrow[d, "\pi_{2}" description] &P^{*}_{3} \arrow[d, "\pi_{3}" description] &P^{*}_{4} \arrow[d, "\pi_{4}" description] &... &P^{*}_{i} \arrow[d, "\pi_{i}" description]\\
T^{*}_{2}\arrow[d, "i_{2}" description] \arrow[dd, bend left,"h_{2}" description] &T^{*}_{3}\arrow[ld, "i_{2}" description] \arrow[ldd, bend left,"h_{3}" description] &T^{*}_{4}\arrow[lld, "i_{3}" description] \arrow[lldd, bend left,"h_{3}" description] &... &T^{*}_{i}\arrow[lllld, "i_{i}" description] \arrow[lllldd, bend left,"h_{i}" description] \arrow[uur, bend right,dotted,"g_{sd}"]\\
Q \arrow[d, dotted,"!\exists" description]\\
\mathbb{H}
\end{tikzcd}
\caption{Distinguishing structure dependent ($g_{sd}$) and structure independent ($g_{si}$) operations, defined in \ref{defintion_structure_dependence}}
\label{dependent_vs_independent}
\end{figure}
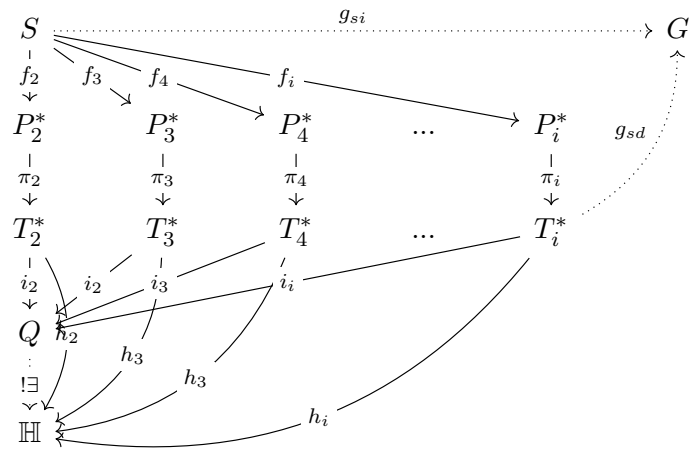

Returning to our world where dogs speak, we have two valid grammatical sentences - ``Dogs are biting men", and ``Dogs said big dogs are biting men" (For ease of explainability, we add the words ``are", ``biting", ``said" to our original vocabulary). The detailed tree structure for the first sentence is drawn in \ref{struct_dependence_1}. 

\begin{figure}[h]
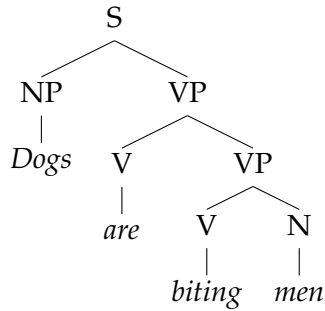

\Tree[.S    [.NP \textit{Dogs} ]
            [.VP    [.V \textit{are} ]
                    [.VP [.V \textit{biting} ]
                     [.N \textit{men} ]]]]
\caption{Structure dependence in action}
\label{struct_dependence_1}
\end{figure}

To find the object of the verb `bite', which in this case is ``men", we replace the object noun - ``men", with the pronoun - ``who", to get the interrogative sentence ``Dogs are biting who". But this is not how the sentence is externalised, or spoken/gestured. The transformation on the sequence - ``Dogs are biting who" to "Who are dogs biting?" is a result of applying a \textbf{structure (or non-rigid)} operation on the original sentence ``Dogs are biting men". The operations necessary for this are explained below.

\subsection{Merge and Move}

We have defined the Merge operation in \ref{Merge_def} and seen how it is essential to creating compositional Deep Structure. Move is the other half of the problem, which relates the transformation of deep structure to surface order. To further understand how they function, we take a slight detour into a few linguistic concepts, restricting ourselves to the bare minimum of what is necessary and within the scope of this thesis.

\begin{definition}
The X-Bar theory proposes a basic structural setup for a phrase in a sentence, with every phrase possessing a head, and also may contain other phrasal constituents in the specifier and/or complement position.

If the phrasal constituent is the sister of a head, it is called the complement, and if a phrasal constituent is the sister of the mother of the head, it is called the specifier. The X-Bar structure of a typical phrase is shown in fig \ref{Xbar}.
\label{def_X-Bar}
\end{definition}

\begin{figure}[h]
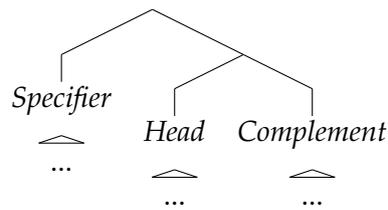

\Tree[.    [\qroof{...}.\textit{Specifier} ]
            [.  [\qroof{...}.\textit{Head} ]
                [\qroof{...}.\textit{Complement} ]]]
\caption{X-Bar structure}
\label{Xbar}
\end{figure}

The proposal that there exists a head in every phrase has led to the analysis of sentences considering strictly ordered functional categories. For example, Tense (T), which marks the grammatical tense, is a functional category and differs from lexical categories such as Nouns, Verbs, etc., in several ways \cite{adger_syntax_2015}. 

\begin{definition}
Move is the operation that takes as input Deep Structure generated by the Merge operation, and rearranges constituents to enable externalisation.
\label{Move_def}
\end{definition}

\ref{Move_def} implies that the Head can be moved to other Head positions in the deep structure, as we see in the following example, to generate yes-no questions (To be contrasted with wh-questions). \\

\begin{figure}[h]
\centering
\includegraphics{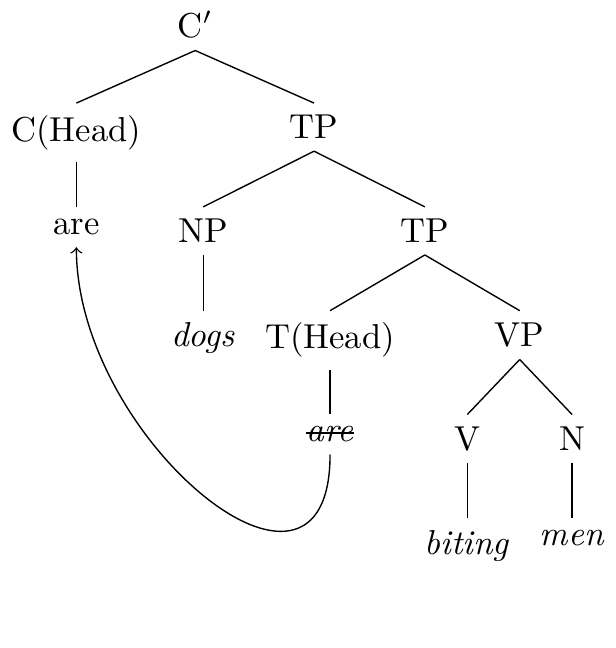}
\caption{Move operation applied to generate yes-no question}
\label{yes-no_q}
\end{figure}

In figure \ref{yes-no_q} the word `are' occupies the head position and has the Tense label. To transform the sentence from a declarative to an interrogative form, we move the `are' to the head complement position C.\\

When generating questions of the Wh-form \cite{anderson_810_2018}, the rules for a move operation is slightly more complex, making the ease with which we judge grammaticality even more astonishing. In this case, depending on what the constituent being queried is, the T head moves to the complement head position, and the Wh constituent moves to the specifier position in \ref{wh_q}(see \cite{anderson_810_2018} for a detailed explanation). \\

\begin{figure}[h]
\centering
\includegraphics{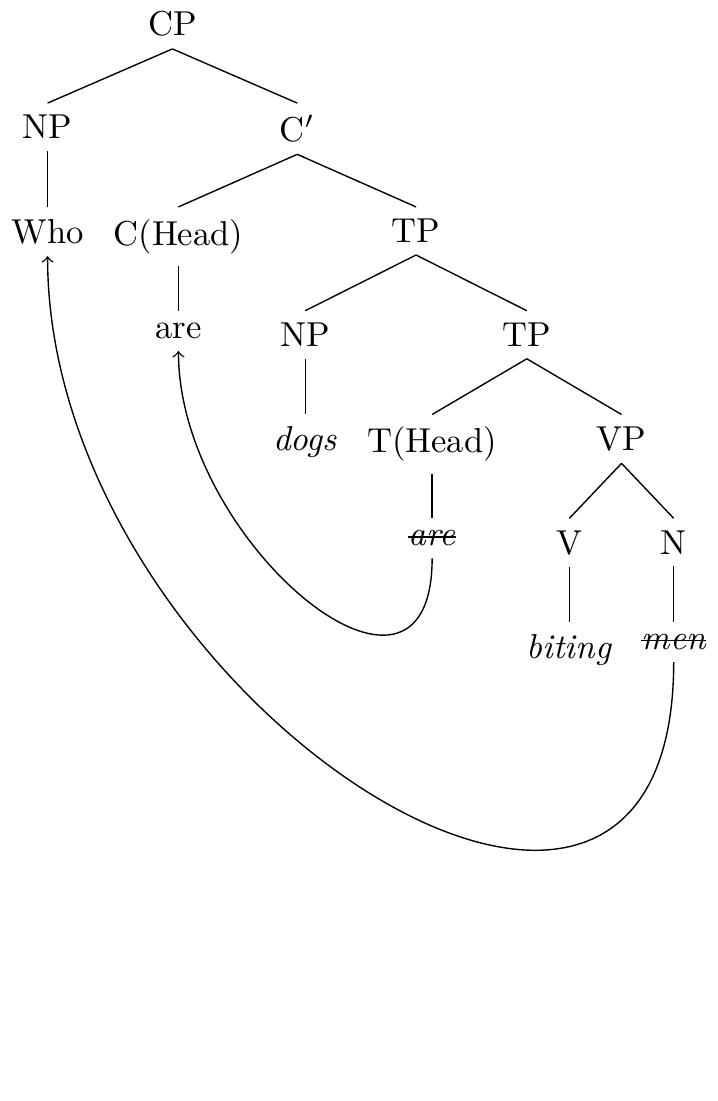}
\caption{Move operation applied to generate wh question}
\label{wh_q}
\end{figure}

Fig \ref{dependent_vs_independent} becomes important in understanding the transformation that leads us to sentences of the wh-form or the yes-no interrogative forms. Notice how the rule for generating these questions rely on the abstract labels given to the constituents (T(head) to C(head)). \textit{Therefore the Merge $+$ Move operation shows that for a generated sentence to be grammatical, the transformation must take as its domain} $\cup_{j=1}^{i} T^{*}_{j}$. 

\begin{definition}
A structure-based dependency $g_{sd}$ is one which relies on the information of the atomic elements that make up the input sequence, and the compositional structure that underlies the surface order. \\

$g_{sd} : g_{sd}(T^{*}_{i},T^{*}_{i-1},..,T^{*}_{2},s_{1}) = s_{2}$, where $s_{1},s_{2}\in S$.\\

On the other hand, a rigid or structure independent operation can be defined as $g_{si}$: \\

$g_{si} : g_{si}(s_{1}) = s_{2}$, where $s_{1},s_{2}\in S$ \\
\label{defintion_structure_dependence}
\end{definition}

A structure-dependent function differs from a rigid dependency in that they operate on sets which are non-isomorphic.

\begin{lemma}
$S \not\cong T^{*}_{i}$. In addition, given two sets $T^{*}_{i}$ and $T^{*}_{j}$ such that $i \neq j$, where the sets are defined from \ref{def_t_star}, $T^{*}_{i} \not\cong T^{*}_{j}$, that is, they are non isomorphic. 
\end{lemma}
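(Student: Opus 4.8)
The plan is to read both non-isomorphism claims in the category of Sets, where an isomorphism is nothing more than a bijection, so that a mismatch of cardinalities will settle non-isomorphism whenever one is present. First I would compute the size of each object. Because $L_{0}$ of Definition \ref{vocabulary} is a finite vocabulary, the recursively built set $S$ of Definition \ref{sequences} is the collection of all finite concatenations of words and is therefore countably infinite, $|S| = \aleph_{0}$. By contrast, Definition \ref{def_t_star} builds $T^{*}_{i}$ with the factor $\mathbb{P}(T) = \mathbb{P}(\mathbb{N})$, whose cardinality is that of the continuum, while the remaining factors $\mathbb{P}(L_{syn})$ and $\{0,1\}$ are finite (of sizes $2^{8}$ and $2$); hence $|T^{*}_{i}| = 2^{\aleph_{0}}$ for every index $i$. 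This step hinges on reading $\mathbb{P}(T)$ as the full power set of $\mathbb{N}$, as written, rather than as the countable family of finite position-sets.

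The first assertion is then immediate: by Cantor's theorem $\aleph_{0} < 2^{\aleph_{0}}$, so no bijection $S \to T^{*}_{i}$ can exist and $S \not\cong T^{*}_{i}$. This is precisely what is needed to keep the domain of a rigid operation $g_{si}$, which lives on $S$, distinct from the domain of a structure-dependent operation $g_{sd}$, which lives on the $T^{*}$ strata, as displayed in Figure \ref{dependent_vs_independent}.

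The second assertion is the delicate one and is where I expect the real difficulty to lie. Taken literally, the product $\mathbb{P}(T)\times\mathbb{P}(L_{syn})\times\{0,1\}$ of Definition \ref{def_t_star} carries no dependence on $i$, so as bare sets $T^{*}_{i}$ and $T^{*}_{j}$ coincide and are trivially isomorphic, and cardinality is of no use since both have size $2^{\aleph_{0}}$. To give the claim content I would reinterpret it in terms of the $i$-dependent object the maps actually produce, namely the image $\pi_{i}\!\left(f_{i}(S)\right) \subseteq T^{*}_{i}$ of the composite built from Definition \ref{def_f} and the projection $\pi_{i}$. The plan is then to isolate a level-indexed invariant of these images. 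From the constituent recursion of Definition \ref{constituent_def} a member of $L_{i}$ is a Merge of something in $L_{i-1}$ with a further constituent, so its minimal leaf-count, and hence the minimal size of the position-set recorded in the $\mathbb{P}(T)$ component, grows with $i$; at the bottom the labels appearing at level one are the purely lexical $\{N,V,Adj,P\}$, whereas phrasal labels such as $NP$ and $VP$ can occur only when $i \geq 2$. Concretely I would (i) describe $\pi_{i}\!\left(f_{i}(S)\right)$ for small $i$ directly from Definition \ref{constituent_def}, (ii) extract from it an invariant that differs between level $i$ and level $j$, and (iii) conclude that no bijection respecting that invariant can exist. The main obstacle, and the reason the statement must be handled with care, is precisely this: since the underlying carrier sets are equal, non-isomorphism can only be genuine once ``isomorphism'' is upgraded from a bijection in Sets to a bijection commuting with the structural maps $f$, $\pi$ and $h$ of Figure \ref{StoT} --- that is, once the distinction is phrased in the appropriate category rather than in Set alone.
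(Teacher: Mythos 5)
Your route is genuinely different from the paper's, and in places more careful about what the statement literally asserts. For $S \not\cong T^{*}_{i}$ the paper does not use cardinality at all: it exhibits two sentences (``Big dogs bite men'' and ``Big men bite dogs'') whose $L_{2}$ constituents end at the same positions $(2,4)$, i.e.\ it shows the canonical map $\pi_{i}\circ f_{i}$ is not injective, and treats that as sufficient. Your Cantor argument ($|S|=\aleph_{0}$ versus $|T^{*}_{i}|=2^{\aleph_{0}}$) actually proves the stronger, literal set-level claim that \emph{no} bijection exists, which the paper's argument does not; the trade-off is that the paper's example is the one that carries the intended linguistic content (many surface strings, one timing pattern). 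For $T^{*}_{i}\not\cong T^{*}_{j}$ your diagnosis is exactly right and is the key point the paper leaves implicit: as bare sets the $T^{*}_{i}$ are identical, so the claim only has content once ``isomorphism'' means a bijection commuting with the structural maps of Figure \ref{StoT}. The paper adopts precisely that reading but executes it differently from your plan: it posits $g_{1}:T^{*}_{i-1}\to T^{*}_{i}$ and $g_{2}:T^{*}_{i}\to T^{*}_{i-1}$ making the square with $f_{i-1}\circ\pi_{i-1}$ and $f_{i}\circ\pi_{i}$ commute, then produces two grammatical sequences $s_{1},s_{2}$ sharing the same element $t\in T^{*}_{i-1}$ (a common left branch) but having distinct roots $a\neq b\in T^{*}_{i}$, forcing $g_{1}(t)$ to take two values --- so no such function exists. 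Your alternative, extracting a level-indexed invariant (minimal size of the recorded position-set, lexical versus phrasal labels) of the images $\pi_{i}(f_{i}(S))$, is viable given the paper's gloss that every $L_{i}$ constituent contains one from $L_{i-1}$, but you leave steps (i)--(iii) as a plan rather than carrying them out; to match the paper you would still need to exhibit the concrete invariant and show it obstructs any commuting bijection. In short: part one, yours is complete and strictly more rigorous for the literal claim; part two, you correctly identify the category in which the lemma lives but stop short of the paper's explicit two-sentence counterexample.
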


\begin{proof}
An isomorphism is also referred to as one-to-one or bijective correspondence. This means that given two sets $X, Y$, with arrows sending elements of $X$ to elements of $Y$ (it is helpful in this instance to think of functions as arrows connecting elements of two sets), no two arrows from $X$ will hit the same element of $Y$, and every element of $Y$ will be in the image. Therefore, to show that two sets are non-isomorphic, it is enough to show that there exist two elements in $X$ such that both map to the same element $y\in Y$. It is trivially true from the formalisation described above that two or more elements of $S$ can have the same positional patterns of $L_{i}$ constituents - Take for example the sentence ``Big dogs bite men" and the sentence ``Big men bite dogs". Both have $L_{2}$ constituents ending at the time steps $(2,4)$, and therefore $S \not\cong T^{*}_{i}$.(See \ref{appendix} for detailed example)\\

To prove that $T^{*}_{i} \not\cong T^{*}_{j}$, we consider the diagram in \ref{proof_1}. It is enough to show that $T^{*}_{i} \not\cong T^{*}_{i-1}$, other cases can be proven analogously. We therefore replace $i$ and $j$ by $i-1$ and $i$ respectively, and assume the existence of two functions $g_{1}:T^{*}_{i-1}\to T^{*}_{i}$ and $g_{2}:T^{*}_{i}\to T^{*}_{i-1}$ and show how $T^{*}_{i} \not\cong T^{*}_{i-1}$ since $g_{1},g_{2}$ do not satisfy properties of a function. For isomorphism to hold, the diagram in \ref{proof_1} must commute. That is, the two paths - $f_{i-1}\circ \pi_{i-1} \circ g_{1} =  f_{i}\circ \pi_{i} \circ g_{2}$. Since we are considering operations on grammatically valid sequences to begin with, we consider $s_{1}, s_{2}\in S$ that are also grammatical. \\

Given that we work with binary branching trees, and the requirement that elements of $T^{*}_{i}$ contain \textbf{at least} one element that belongs to $T^{*}_{i-1}$ (from \ref{constituent_def}), we find that if $s_{1},s_{2}$ have an image in $T^{*}_{i}$, they must also have images in $T^{*}_{i-1}$. But that is not to say that both $s_{1}, s_{2}$ must have different images in both image sets. Let $s_{1}$ have an element $a\in T^{*}_{i}$ acting as the root node, and $t\in T^{*}_{i-1}$ as a part of its left branch. Similarly, let $s_{2}$ have another element $b \in T^{*}_{i}$ as its root node (to be distinguished from $s_{1}$ in the contents of its right branch), and have the same element $t\in T^{*}_{i-1}$ in its left branch. This means that when traversing the path from left to right - $f_{i-1}\circ \pi_{i-1} \circ g_{1}$, we have two elements in $S$ giving us identical images in $T^{*}_{i-1}$, which lead to two different images in $T^{*}_{i}$. This means that $g_{1}$ does not satisfy the basic requirement of a function, which is that one element have only one image. Therefore, the two sets are non-isomorphic.\\

\begin{figure}
\centering
\includegraphics{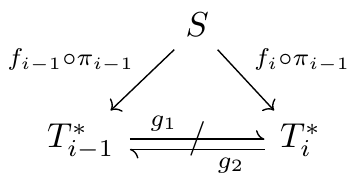}
\caption{Non-isomorphism of $T^{*}_{i}$s}
\label{proof_1}
\end{figure}
\end{proof}

A few important considerations are obtained as a result of imparting the above properties to a cognitive system \cite{martin_compositional_2020}. Firstly, compositional information processing systems should necessarily use a \textbf{time bound, incremental representation}, to reflect what we know from language. Also, from def \ref{defintion_structure_dependence}, in order to possess structure dependence, the bare minimum is representation. That is, the \textbf{explicit, recoverable presence of compositional elements}. \\

This chapter was meant to get a grip on essential linguistic concepts from a formal perspective. The development of modern NLP techniques sprang from the idea that strings and sequences in natural language have statistical properties, which is of course, not disputable. But the problem of teaching machines language will remain incomplete as long as the necessity for structure is not recognised. The properties and constraints we have seen in this chapter are partially fulfilled by DORA. We now see what properties DORA possesses that make it suitable to process language.

\label{part:body2}
\chapter{Implementation}
\label{chapter:body2}
To get an algorithmic understanding of how a few properties described in the formalisation can be simulated in a cognitive model, we look at DORA (Discovery of Relations by Analogy)\cite{doumas_theory_2008}. In these simulations, DORA is used as a classifier, which - given a set of natural language propositions, uses a Self Supervised Learning (SSL) routine to find mapping connections between elements of the propositions. These simulations were run to see the evolution of mapping strength over time, and to understand what kind of word representations are most favourable to learn the structural similarity between propositions. How such a learning routine changes the network configuration of DORA, and what it means for the formalisation, is the subject of discussion in chapter \ref{chapter:resultsanddiscussion}. 

\section{Introduction to DORA}

DORA is a cognitive model of relational learning and analogy. There are a few minimal constraints that need to be imposed on a system if it is to be capable of processing relationships. First-order logic defines a predicate as a function of zero or more variables that return Boolean values. A multiplace predicate can be recast as a collection of single-place predicates (one for each role of the relation), with functions for linking them. This necessitates using representational elements to correspond to the predicate and the object, and a way to combine them. \\

To maintain the notion of a \textit{variable}, DORA must be able to keep relational roles separate from the arguments, and also make the binding dynamic. That is, a mechanism of binding must be independent of the elements themselves. Keeping these requirements in mind, we take a look at the architecture DORA uses.(All the terms are described in the tables \ref{dora_components} and \ref{dora_inhibition_components})

\subsection{Architecture}

Propositions in DORA are represented across four layers, termed the P, RB, PO, and semantic units. We first look at how each proposition in DORA is encoded (ie., the micro-architecture), and then understand how the encoding of individual propositions relates to how an entire knowledge base is represented in DORA (ie., the macro-architecture).

\begin{figure}
    \centering
    \includegraphics{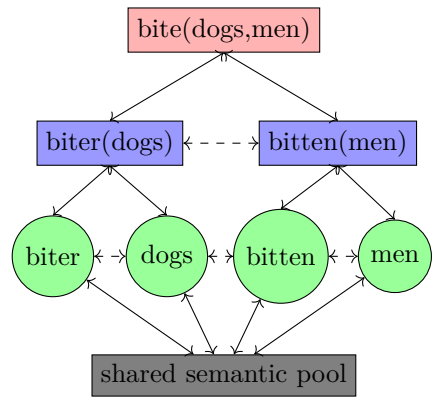}
    \caption{Representing the proposition - bite(dogs, men) in DORA. Topmost is the P layer, followed by RB layer, then PO layer, then the semantic layer. Bidirectional connections are indicated by vertically aligned arrows, horizontally aligned arrows are lateral inhibitory connections between units of the same layer.}
    \label{sample_prop}
\end{figure}

\subsubsection{Semantic layer}
The bottom-most layer is the semantic layer and it acts as the perceptual layer connecting the real world to DORA. This is done via representing entities as distributed features, which are commonly accessible to all the units in the layer above it. In the original instantiation of DORA and LISA \cite{doumas_theory_2008}\cite{hummel_distributed_nodate} this semantic layer encoded features such as visual invariants (height, colour, size, etc.), relational invariants (more-than, less-than, etc.), complex categorical features (animal, country, etc.). Some of these featural invariants, such as visual features, are claimed to be innate to human processing abilities.

\subsubsection{PO layer}
The PO layer codes for the individual predicates and their objects. The connections between the predicate, object units, and the semantic units are given by link weights joining them. This enables the passing of activation from the semantic layer to the PO layer (and vice versa) as a factor of the link weight. For example, the object `dogs' can have certain semantic features associated with it, such as (`size',`fur','bark') and link weights (0.4, 0.8, 0.95) The link weights take values in range (0,1) ). The predicate and object units are structurally identical, and are differently coloured in figure \ref{sample_prop} only to make the conceptual difference explicit. 

\subsubsection{RB layer}
Above the PO layer sits the RB or Role-binder layer, which encodes the bindings of the relational roles to their fillers. In \ref{sample_prop}, there are two RB units encoding the proposition bite(dogs,men), given by biter(dogs) and bitten(men). This utilises the idea that n-ary entities can be broken down into n binary-entities, where each binary unit is represented in the form of one RB unit. As it will be detailed in the following sections, an RB unit can also have a P unit as its child, thereby imbuing DORA with powerful recursive abilities. Every RB unit shares bidirectional excitatory connections with its PO units, and also to the P unit above it. 
\subsubsection{P layer}
The P layer is the topmost layer, and consists of full propositions. It is capable of binding n-array of RB units, but in keeping with binarity, we look at P units as conjunctively binding two RB units. The P units share bidirectional excitatory connections with their RB units. All units also laterally inhibit other units in their layer. That is, PO units inhibit other PO units, RB units inhibit other RBs, and P units inhibit other Ps. This is shown in the horizontal dashed lines in figure \ref{sample_prop}. 

\subsection{Memory Banks}
Propositions in DORA are members of three sets at any given point in time - the Driver, Recipient, and Long-Term Memory (LTM). The Driver is intended as an analogue of the working memory or the focus of attention, and the flow of all activities in DORA starts from it. Propositions in the Driver pass activation downwards, and into the semantic layer, which is shared between the Recipient and the LTM. The LTM is mainly a storage space for all propositions, and during retrieval, DORA moves propositions into the recipient for mapping. These patterns of activation in the semantic layer are what drive relational mapping and structure discovery, as elaborated in the following sections.

\begin{figure}[h]
    \centering
    \includegraphics[scale=0.7]{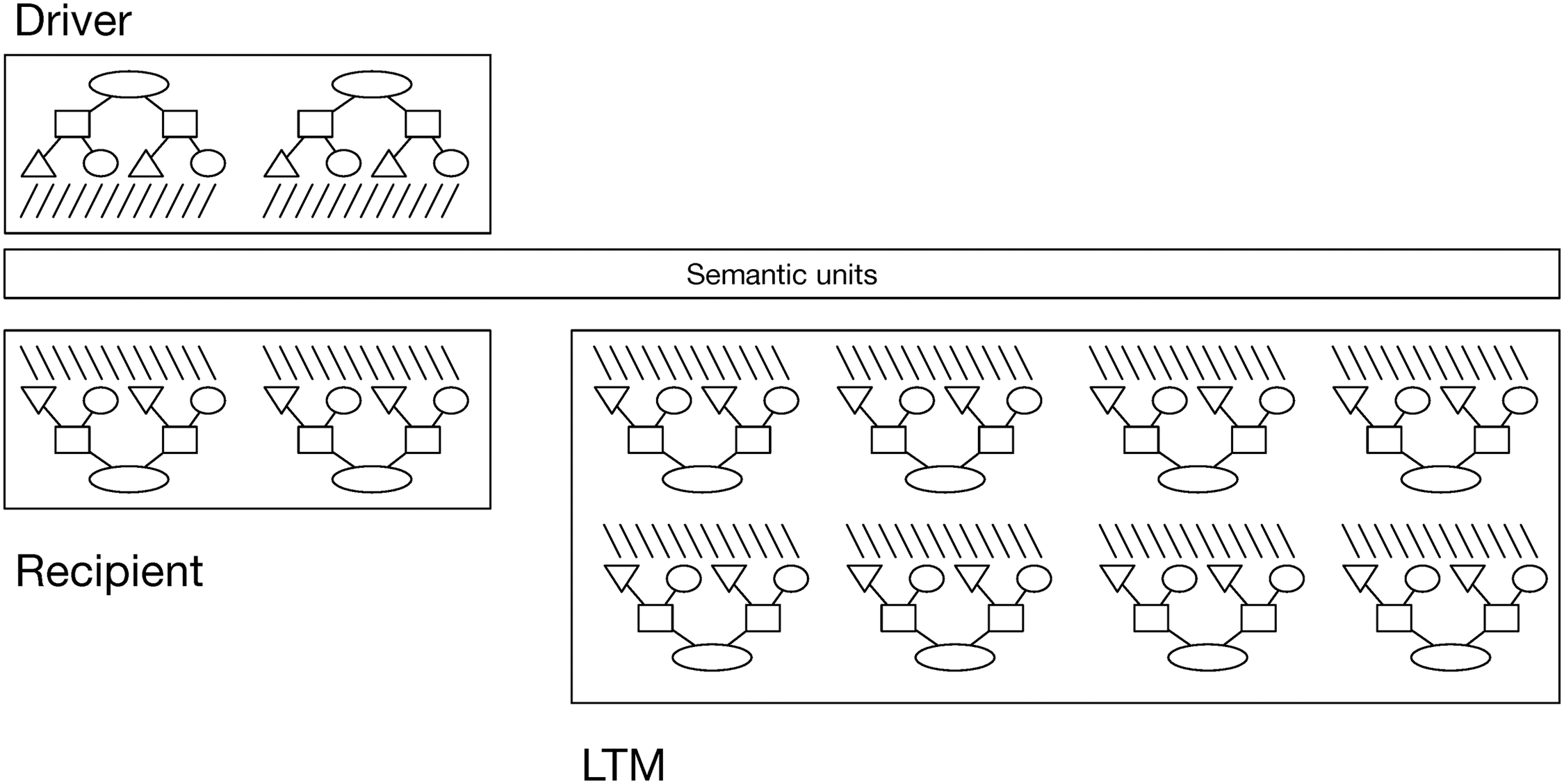}
    \caption{Banks of units in DORA containing represented propositions, from \cite{martin_mechanism_2017}}
    \label{DORA-banks}
\end{figure}

\subsection{Dynamic role-filler binding}
We have covered the architectural setup of DORA, and noted the minimum processing constraints. Now we see how DORA achieves this. DORA uses a generalised notion of firing synchrony as a way of keeping role-filler bindings distinct. That is, information is carried by when role-filler units fire rather than which units fire. When a proposition in the driver becomes active, it passes activation to its RB units which fire out of synchrony with each other. See \ref{illustration_dora_binding} for a detailed explanation of how DORA processes a proposition.\\

This interplay between synchronous and asynchronous firing is a result of the complementary excitatory and inhibitory connections that DORA possesses. Each RB and PO unit possesses an excitatory unit yoked to an inhibitory unit \cite{doumas_theory_2008}. The exciters accept inputs from the bidirectional connections to the layers above and below them, and passes a part of this to its inhibitor, and inhibitors at lower levels. As mentioned before, the individual units also share lateral inhibitory connections with other units of the same type. The result is that groups of oscillators that share inhibitory connections will tend to oscillate out of synchrony with each other, due to complementary push-pull mechanisms. Note that PO units obtain inhibitory inputs both laterally and from RB units above them, which means that a typical PO inhibitor obtains twice as much input compared to an RB inhibitor. Therefore, PO units oscillate with twice the frequency of an RB unit.\\ 

To distinguish boundaries between individual RBs and individual POs, DORA produces an inhibitory `refresh' signal when RBs (and analogously POs) are not active above a certain threshold. The resulting pattern for a proposition that uses two roles is then - role1 - refresh - filler1 - refresh - REFRESH - role2 - refresh - filler2 - refresh - REFRESH, where refresh is the PO refresh signal, and REFRESH is the RB refresh signal. This refresh signal is created by the local and global inhibitors as described below - 

\begin{longtable}{|p{2cm}|p{\linewidth-3cm}|}
\hline
Inhibitor type & Description \\[0.5ex] 
\hline\hline
Yoked inhibitor & Every unit (except semantic layer) is coupled with an inhibitor. The purpose of the PO and RB inhibitors is to establish the time sharing that carries role filler binding information.\\
Local/Global inhibitor & Serve to coordinate activity between driver and recipient. Activation of local inhibitor, $\tau_{L} = 0$ when any PO unit in driver has activation > 0.5. Otherwise, $\tau_{L} = 10$. During asynchronous time sharing, there is a period of time when Predicate has fired, and object isn't active. This is when local inhibitor becomes active, and acts as a local refresh signal for changes from Pred-object activation to be reflected in the recipient.The global inhibitor works similarly, but tracks RB unit changes. ie., $\tau_{G} = 0$ when RB activations > 0.5, $\tau_{G} = 10$ otherwise.\\
\hline
\caption{Inhibition in DORA}
\label{dora_inhibition_components}
\end{longtable}

Note that the most important use of asynchronous binding is that information is bound with time, and not the properties of the unit itself, which allows equivalent description, with dynamic binding.

At this point, it is useful to summarise all the definitions -

\begin{longtable}{|p{2cm}|p{\linewidth-3cm}|}
\hline
Term & Description \\[0.5ex]
\hline\hline
Semantic layer & Common to all PO units, where every unit represents a single dimension along which predicate/object is encoded.\\
Link weights & Connect PO units to the semantic layer. Activations are passed as a factor of link weight.\\
PO Layer & Units represent individual predicates and objects. Objects can contain higher-order propositions.\\
RB layer & Units represent the binding of role (predicate) and filler (object).\\
P layer & Units represent the overall proposition\\
Analog & All propositions belonging to a single story/environment, which reuse objects and predicates amongst their propositions.\\
Driver & Represents the Field of Attention of DORA. All activations start in the Driver, and flow into the Recipient or LTM.\\
Recipient & Contains the propositions which are retrieved from LTM to enable mapping.\\
Long term memory (LTM) & Contains all the propositions from all analogs.\\
Bidirectional connections & The links connecting units of two different layers are excitatory, serving to propagate activation through the network.\\
Lateral inhibition & A competitive mechanism, where units of the same type seek to reduce the activation of its neighbours.\\
Retrieval & The process of moving coactive units/analogs from LTM to Recipient as a precursor to Mapping/predication/schema induction etc. \cite{doumas_theory_2008}\\
Mapping & The process of discovering which elements in driver correspond to which elements in the Recipient.\\
Phase set & Set of mutually desynchronised RB units. Every phase set is set to run 3 times, updating mapping connections (from hypotheses) at the end of every set. For the current implementation, a phase set is the set of all units in the firing order.\\
Firing Order & To simulate DORA, the firing order of PO units (analogous to sequential presentation of words) in the Driver is set up before DORA begins its learning routine.\\
Mapping hypotheses & Uses Hebbian learning to build evidence for a connection between propositions in the driver to propositions in the recipient. \\
Mapping connections & Uses mapping hypotheses to update mapping connections that are committed to LTM.\\
\hline
\caption{Summary of components in DORA}
\label{dora_components}
\end{longtable}

\subsection{Flow of Control}

In this section, we look at the various steps in the self-supervised learning routine used by DORA to form mappings. Given many analogs in the LTM, DORA first picks out a random analog into the driver to begin its routine. Then it performs Retrieval, to place into the Recipient, the nearest remaining analog from the LTM. If retrieval is successful, it starts the mapping process. The routines which pass activation from driver to LTM/Recipient are coloured blue, retrieval is coloured orange, and mapping processes are coloured green.

\begin{figure}[h]
    \centering
    \includegraphics[scale=0.6]{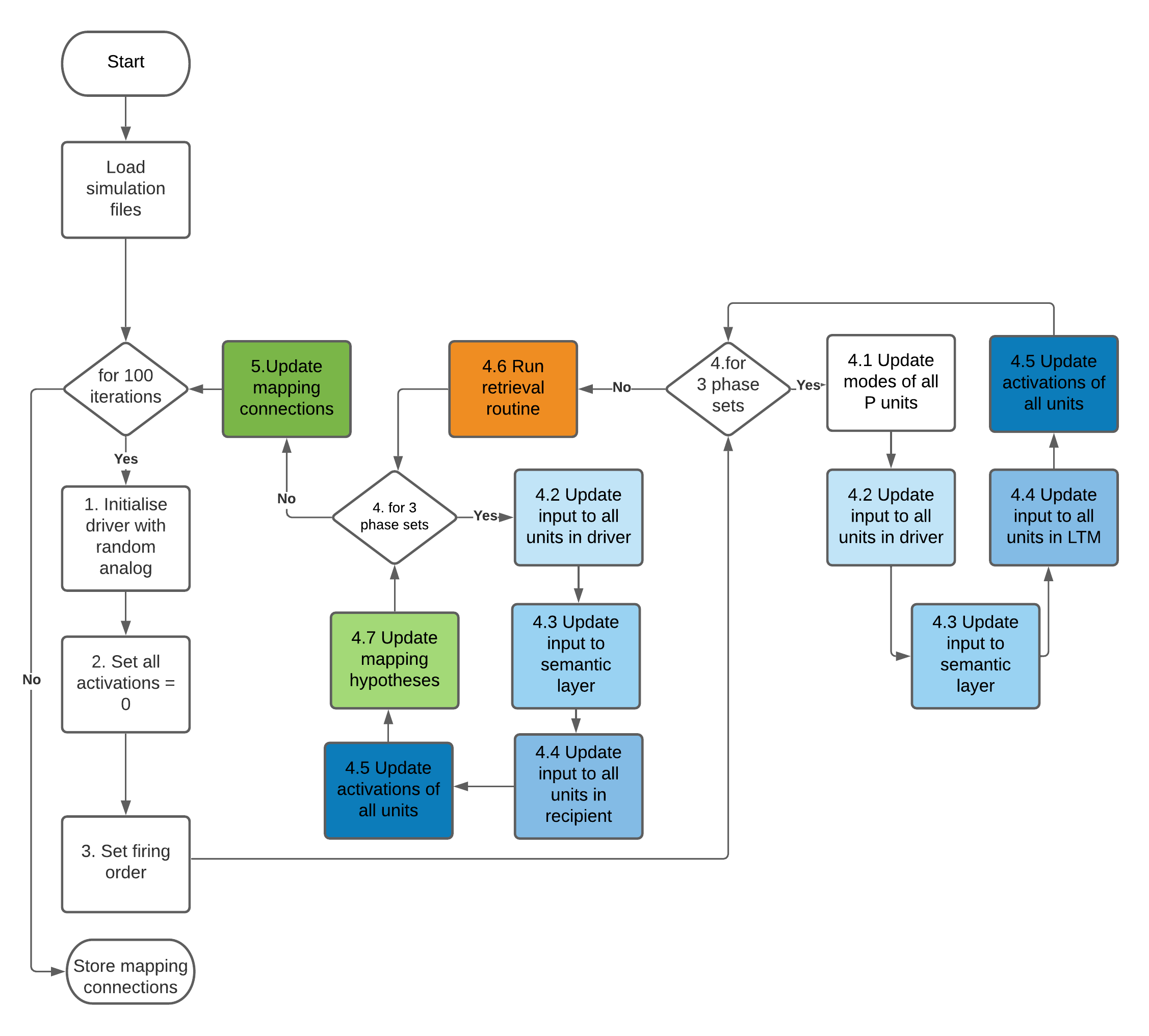}
    \caption{Flow of control in DORA}
    \label{flowchart_dora}
\end{figure}

Now we will look at a brief explanation of these individual functions. Readers are encouraged to go through \cite{doumas_theory_2008} to get an in-depth understanding. At the start of the simulations, DORA's LTM is initialised with the simulation files containing the propositions as detailed in section \ref{dataset_representation}.

\subsubsection{Step 1,2,3 : Initialise network}
In the first step, a randomly chosen analog (containing the Ps, RBs, and POs) is loaded into the Driver, and activations of all units in DORA is set to 0. A randomly chosen analog, together with all of its units is moved from the LTM to the Driver. To run simulations, the firing order is set to the sequence of POs in the driver. That is, given the contents of a driver, which contains $n$ propositions, the firing order contains the concatenated list of all words appearing in these $n$ propositions. For example, given only a proposition in the driver - ``big dogs bite cats'', the firing order is = PO(big), PO(dogs), PO(bite), PO(cats). The activations of all these PO units are clamped to 1, until the local inhibitor fires.

\subsubsection{Step 4.1 : Update P modes}

P units operate in three modes - Parent, Child, Neutral at any given point in time. That is, a P unit can be in child mode if it acts as a filler to an argument. The mode of a unit $i$ is $m_{i}$ and is given by :

\begin{center}
\[   
    m_{i} = 
         \begin{cases}
           \text{Parent(=1)} &\quad\text{$RB_{above}<RB_{below}$}\\
           \text{Child(=-1)} &\quad\text{$RB_{above}>RB_{below}$} \\
           \text{Neutral(=0)} &\quad\text{otherwise}
         \end{cases}
    \]
\end{center}

$RB_{above}$ is the summed input from RB units to which $P_{i}$ acts as an object. $RB_{below}$ is the summed input from $P_{i}$s downward connection. Mapping hypotheses are only set up between P units of the same type (shown in step 4.7).

\begin{definition}
Input to a unit $i$ is denoted $X_{bank,i}$ where $X \in $ \{P,RB,PO\}, $bank \in $ \{Driver, Recipient, Memory\}. Semantic units are denoted $SEM_{i}$. 
\end{definition}

\subsubsection{Step 4.2 : Update inputs to all units in driver}
\begin{center}
\[
P_{driver,i} = 
    \begin{cases}
        \sum_{j} a_{j} - \sum_{k} 3a_{k} &\quad\text{$m_{i} = 1 or 0$}\\
        \sum_{j} a_{j} - \sum_{k} a_{k} - \sum_{l} a_{l} - \sum_{m} 3a_{m} &\quad\text{$m_{i}=-1$}
    \end{cases}
    \]
\end{center}
P units update their inputs according to the equations shown above. In case of the parent mode, input is the sum of downward $RB_{j}$, and lateral connections $k$ in parent mode. In the child mode, input is the sum of upward $RB_{j}$, other P's in child mode ($a_{k}$), $l$ is all PO units not connected to same RB as $P_{i}$, $m$ is all PO units connected to same RB as $P_{i}$. 

\begin{center}
\[
RB_{driver,i} = \sum_{j} a_{j} + \sum_{k} a_{k} - \sum_{l} 3a_{l} - 10I_{i}
\]
\end{center}
Input to the $RB_{i}$ is given by the $P_{j}$ in parent mode, $k$ are PO units connected to $RB_{i}$, $l$ are other RB units in the driver, $I_{i}$ is the RB inhibitor yoked to it.
\begin{center}
\[
PO_{driver,i} = \sum_{j} a_{j}G + \sum_{k} a_{k} - \sum_{l} a_{l} - \sum_{m} 3a_{m} - \sum_{n} a_{n} - 10I_{i}
\]
\end{center}
Input to the $PO_{i}$ is given by the upward connected $RB_{j}$, and gain parameter (G=2 for predicate, =1  for object), $k$ are P units in child mode and not connected to same RB as $i$, $l$ is all PO units not connected to same RB as $i$, $m$ are PO units connected to same RB, $I_{i}$ is activation of yoked inhibitor of PO.\\

Remember how every unit is also yoked to its inhibitor. The input to such a yoked inhibitor at any point $t+1$ is given by -
\begin{center}
\[
Inhibitor_{i}^{t+1} = Inhibitor_{i}^{t} + \sum_{j} a_{j}w_{i,j}
\]
\end{center}
$t$ is the current time step, $j$ is the RB or PO unit yoked to inhibitor, $w_{i,j}$ is the weight between inhibitor $i$ and the unit it is yoked to. RB inhibitors are yoked only to their RBs, whereas PO inhibitors are yoked to both their own POs and RB units. As a result, PO inhibitors get twice as much input compared to RB inhibitors, and therefore fire twice as frequently.\\

\subsubsection{Step 4.3: Input to semantic layer}

\begin{center}
\[
SEM_{i} = \sum_{j\in PO_{D},PO_{R}} a_{j}w_{i,j}
\]
\end{center}

Input to a semantic unit is the sum of the product of inputs from PO units in the driver (and recipient) and the link weights between them.

\subsubsection{Step 4.4: Update input to recipient/LTM units}

At this point, it is important to distinguish between the various inhibitor units. \textbf{Yoked inhibitors}, $Inhibitor_{i}$ are coupled to PO and RB units. On the other hand, the \textbf{local and global inhibitors} serve to coordinate activity between the Driver and Recipient.

\begin{center}
\[
P_{LTM(or Recipient),i} = 
    \begin{cases}
    \sum_{j} a_{j} + M_{i} - \sum_{k} 3a_{k} - \tau_{G} &\quad\text{$m_{i} = 1, 0$}\\
    \sum_{j} a_{j} + M_{i} - \sum_{k} a_{k} - \sum_{l} a_{l} - \sum_{m} 3a_{m} - \tau_{G} &\quad\text{$m_{i} = -1$}
    \end{cases}
\]
\end{center}

The subscripts are indicative as in the Driver P input update in step 4.2.

\begin{center}
\[
M_{i} = \sum_{j} (3a_{j}w_{i,j} - Max(Map(i)) - Max(Map(j))) 
\]
\end{center}

The additional term $M_{i}$ corresponds to the mapping connections from unit $i$ to $j$ units in the driver it is connected to. These mapping connections pass activation directly to the recipient units from other units in the driver. $Max(Map(i))$ is the maximum value of the mapping connections originating in unit $i$. 

\begin{center}
\[
RB_{LTM(or Recipient),i} = \sum_{j} a_{j} + \sum_{k} a_{k} + \sum_{l} a_{l} + M_{i} - \sum_{m} 3a_{m} - \tau_{G}
\]
\end{center}
Input to RB units - $j$ is P units in parent mode to which $i$ is upwardly connected, $k$ is P units in child mode (downwardly connected), $l$ is PO units connected to $i$, $M_{i}$ is mapping input, $m$ is other RB units in recipient, $\tau_{G}$ is the global inhibitor.

\begin{center}
\[
PO_{LTM(or Recipient),i} = \sum_{j} a_{j} + SEM_{i} + M_{i} - \sum_{k} a_{k} + \sum_{l} a_{l} - \sum_{m} 3a_{m} - \sum_{n} a_{n} - \tau_{G} - \tau_{L}
\]
\end{center}
Input to the PO units - $j$ is the RB unit connected to $i$, $SEM_{i}$ is the semantic input to $i$, $M_{i}$ is mapping input, $k$ is all POs not connected to same RB, $l$ is all P units in child mode not connected to same RB, $m$ is PO units connected to same RB, $n$ is RB units in recipient to which unit $i$ is not connected.

\subsubsection{Step 4.5 : Update activations}
DORA uses a simple leaky integrator as shown in \ref{activation_equation}, with $\gamma = 0.3, \delta=0.1$. RB and PO units update activation as : 
\begin{equation}
\Delta a_{i} = \gamma n_{i}(1.1 - a_{i}) - \delta a_{i}]_{0}^{1}
\label{activation_equation}
\end{equation}
Where $\Delta a_{i}$ is the change in activation of the unit,
$n_{i}$ Net input to the unit, as calculated in the previous steps, $\gamma$ is the growth rate and $\delta$ is the decay rate. The activation is also clipped to lie in the range - [0,1]. The net input at any point in time is dependent on the type of unit, and its presence in the driver/recipient. 

\subsubsection{Step 4.6,4.7,5 : Retrieval and Mapping}
\label{section_retrieval_mapping}

A preparatory step in DORA is the retrieval of analogs (a set of propositions representing an environment) from the Long term Memory (LTM) into the recipient for further operation. The assumption behind this step is that human cognitive processing is limited by working memory, which is why there is a separate step to `chunk' or move propositions from LTM to Recipient. The activation and inputs of units in the LTM are updated the same as units in Recipient. Retrieval in DORA can be described as guided pattern recognition. A proposition in the driver passes activation to the semantic layer, and these patterns of activation lead to co-active excitation in the LTM. After all the propositions in the Driver have fired, DORA retrieves analogs probabilistically using the Luce choice algorithm - 
\begin{equation}
    p_{i} = \frac{R_{i}}{\sum_{j} R_{j}}
\label{luce_equation}
\end{equation}

Where $p_{i}$ is the probability that the analog $i$ will be picked, given the activation $R_{i}$ and the sum of all other analogs in the LTM in $\sum_{j} R_{j}$. A if $p_{i} > r$, where $r$ is a random number, $r\in (0,1)$.When one analog retrieved, all the RB and PO units that belong to the analog are added to the Recipient.\\

Mapping is the subject of our primary attention. It is the process of discovering which representational element (including POs, RBs, and Ps) in the driver that DORA is currently looking at (in the driver), matches with the elements in the Recipient. \textit{It is the algorithmic equivalent of finding the membership of the current proposition with respect to the equivalence classes formed on the set of propositions in DORA.} \\

Mapping hypotheses are generated for every unit in the Driver, and they represent possible connections to every other unit of the same type in the Recipient. These mapping hypotheses accumulate evidence at each point in time, for a mapping connection between two units using a simple Hebbian learning rule :

\begin{equation}
    \Delta h_{i,j}^{t} = a_{j}^{t}a_{i}^{t},
\label{hebbian_learning_equation}
\end{equation}

where $\Delta h_{i,j}^{t}$ is the mapping hypothesis at time $t$, and $a_{i},a_{j}$ are the activations of the units under consideration. The mapping weights between these units are updated at the end of every phase set. Since these mapping connections are also excitatory, they serve to constrain further discoverable relations by passing activations directly to previously discovered mappings. This is foundational to many properties of human analogical processing \cite{doumas_theory_2008}\cite{hummel_distributed_nodate}. \\

From the formalisation, we recognise that any mapping between units must take into account the relative placement in time of its child units! With equation \ref{hebbian_learning_equation_modified}, we constrain mapping by using Pearson's correlation of RB units' children's activation. That is, not only RB's activation, but the correlation of the time patterns of its children's activation is used as a feature. 

\begin{equation}
\Delta h_{i,j}^{t} = a_{j}^{t}a_{i}^{t} + corr(a_{j,children}^{t}a_{i,children}^{t}) 
\label{hebbian_learning_equation_modified}
\end{equation}

\section{Dataset and representation}\label{dataset_representation}

In this thesis, DORA's mapping functionality is leveraged to form connections between natural language sentences across different analogs (see \cite{karthikeyakaushik_karthikeyakaushikdora_2020} for code). The problem of learning predicates for natural language sequences is an open question and beyond the scope of this thesis. What we instead investigate, is the representational constraints enforced by the formal method described in the preceding chapter - \textit{Cognitive systems capable of processing language must display incremental, structure-dependent functions.}\\

To understand these functions, we look at a synthesised dataset, with 4 analogs, consisting of 8 propositions in each analog. As mentioned before, one analog is a situation or environment. Therefore every analog consists of various objects and actions representative of that environment. For example, $analog_{1}$ represents a situation involving dogs and cats, and the actions relating them or other objects in their shared space. A few sentences in this analog and their representation in DORA is shown in \ref{analog_1}\\

\begin{figure}
\begin{tikzpicture}
\node(img) at (3,0)
    {\includegraphics[scale=0.7]{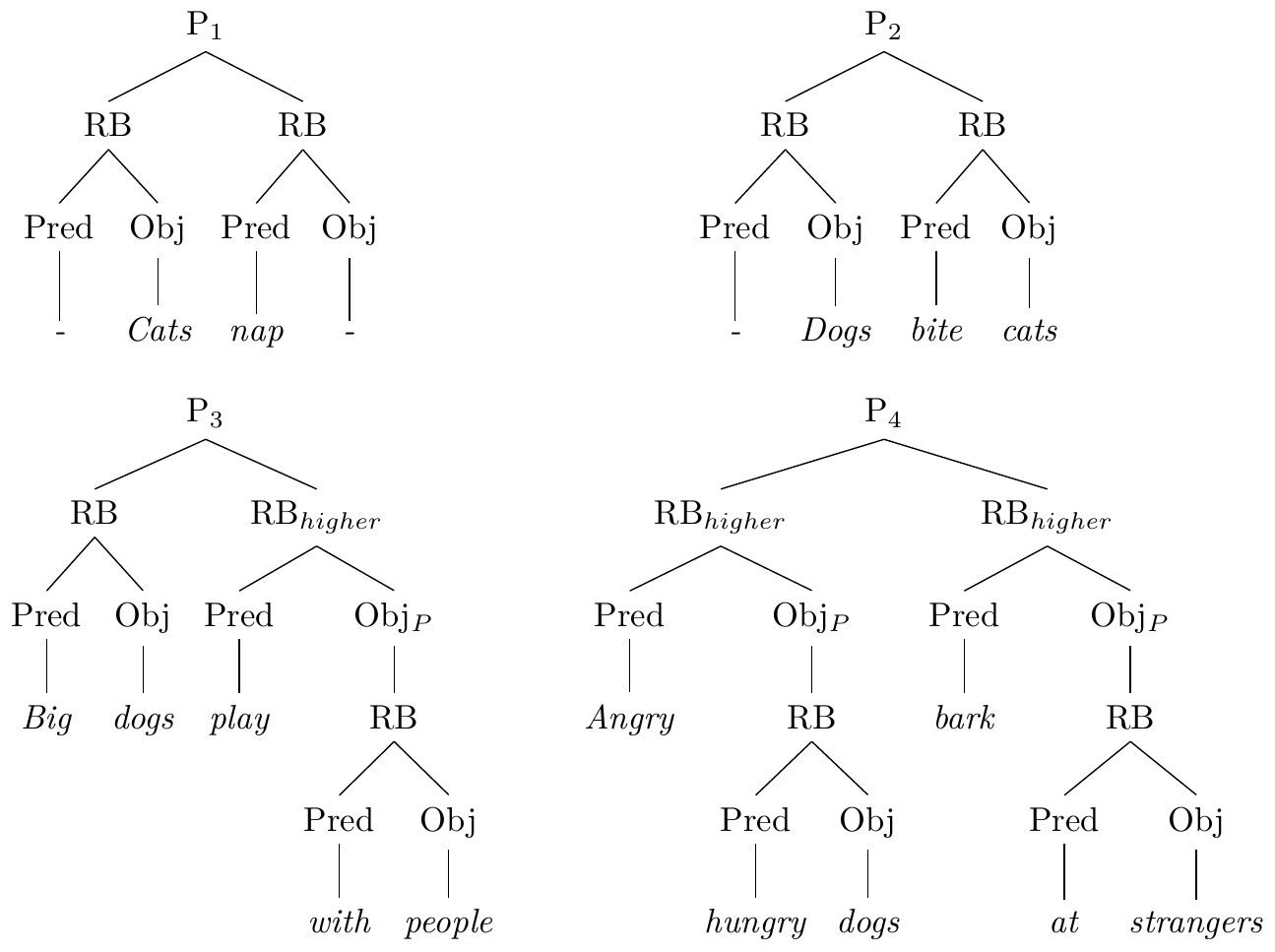}};
\node[draw,text width=\linewidth/2,scale=0.7] (txt) at (-5,0){$P_{i}$ : Proposition unit \\
$RB,RB_{higher}$ : RB unit, RB unit containing proposition\\
$Pred,Obj,Obj_{P}$ : Predicate and Object unit, Object unit containing higher order proposition\\
$`-'$ : Empty predicate};
\end{tikzpicture}
\caption{Examples of propositions in a single analog}
\label{analog_1}
\end{figure}

Notice that in some cases, we have an empty predicate. If all sequences are encoded in first-order logic, then the structure looks very different from \ref{analog_1}. For example, the sentence `Cats nap' should be encoded as nap(cats), where the predicate `nap' acts on `cats'. While this is true of propositions encoded in first-order logic, we cannot do the same due to constraints from a linguistic perspective. That is, `Cats' acts as the subject of the proposition where `nap' is the intransitive verb (verbs which do not take direct objects) the subject employs. In this way, we can justify the structure shown. Refer [..glossary..] for the set of all propositions used. \\

\subsubsection{Word2vec}

DORA uses the Semantic layer as a bridge to connect feature invariants between the LTM, Recipient, and the Driver banks. The question on how to represent words in DORA for this task was an important one and is also one of the criticisms that DORA faces. Previous work involving DORA or LISA have focused on interpretable and often hardcoded topical features. For example, in a typical setup, the features of a `Lion' would reflect its characteristics, such as size, voice, aggression, etc., It is certainly true that human concept formation involves this kind of dimensional encoding \cite{hummel_distributed_nodate}, but how ML techniques address this problem is an area of active research.\\

To overcome this constraint, we use word2vec representations in the semantic layer. We will now take a brief look at how these embeddings are obtained. \\

Modern language models such as RNNs (Recurrent Neural Networks) are very good at predicting what the next word is, given an incomplete sequence of words. One very important basis for this is the improved quality is embedded word representations\cite{alammar_illustrated_nodate}. Word embeddings are a class of methods where individual words are represented as real-valued vectors. Word2vec is a conceptual generalisation of Gottlob Frege's context principle - ``never ask for the meaning of a word in isolation, but only in the context of a proposition"\cite{zalta_gottlob_2020} . It is a generalisation because word embeddings are very good at describing the lexical and semantic content of words based on the corpus of data the language model has been trained on. There exist primarily two approaches to obtaining word embeddings - Continuous Bag of Words (CBOW) and Skip-gram. \cite{mikolov_distributed_2013}\cite{mikolov_efficient_2013}\\

The CBOW model generates embeddings by predicting the current word using a given context (past and future), while the Skip-gram method learns to predict the context using a given word. The CBOW is considered generally better only for frequent words, whereas Skip-gram is computationally more expensive, but does better on a sparser corpus.\\

\begin{figure}[h]
    \centering
    \includegraphics[scale=.7]{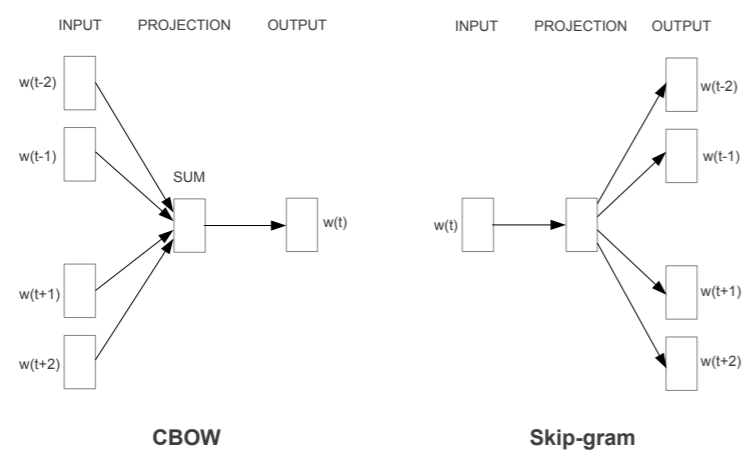}
    \caption{Comparison of cbow and sg, taken from \cite{mikolov_distributed_2013}}
    \label{cbow_sg}
\end{figure}

We run tests on two word embedding models, which are pretrained on large corpora of data - Google word2vec, which has been trained on 300 billion words from the Google news corpus\cite{noauthor_google_nodate}. The WordGCN model, which is trained on the Wikipedia corpus, uses Graph Convolutional Networks to combine syntactic context for improving word representations without increasing vocabulary size\cite{vashishth_incorporating_2019}. Both of these pretrained models encode words as 300-dimensional vectors. \\

There are a few more issues to be resolved before encoding the embeddings into DORA. Considering DORA's computationally intensive architecture and the small dataset, it becomes important to reduce the memory requirement for representing words. We therefore use Principal Component Analysis (PCA) to reduce the dimension of the word embeddings. PCA is a method for obtaining maximally uncorrelated variables from the given datapoint$\times$variable matrix. In our case, the words are datapoints and embedding dimensions are variables. We first define $M\in \mathbb{R}^{n\times k} $, as containing $n, \mathbb{R}^{k}$ vectors arranged into a matrix. Therefore, the task of finding embedding dimensions that preserve maximal variance reduces to finding the eigenvalues and eigenvectors of the correlation or covariance matrix of $n$ datapoints \cite{helwig_principal_nodate}.\\

Given $M$, we first calculate the correlation matrix of the $n$ datapoints treating every dimension $k$ as a variable. Given $x_{i,j}\in M$, we define the correlation matrix as $R\in \mathbb{R}^{k\times k}$, where $\overline{x_{i}}$ is the mean of all datapoints in a certain dimension, $r_{l,m}\in R$ is the Pearson correlation between two dimensions $l,m$:
\[
r_{l,m} =  \frac{s_{lm}}{s_{l}s_{m}} = \frac{\sum_{i=1}^{n}(x_{il} - \overline{x_{l}})(x_{im}-\overline{x_{m}})}{\sqrt{\sum_{i=1}^{n}(x_{il} - \overline{x_{l}})^{2}}\sqrt{\sum_{i=1}^{n}(x_{im}-\overline{x_{m}})^{2}}}
\]

Given $R$, we calculate its eigenvalues - $\lambda_{1}\geq \lambda_{2}\geq ... \geq \lambda_{k}$, and the corresponding eigenvectors - $e_{1}, e_{2} ... e_{k}$. At this point, we choose the top $p$ eigenvectors which can - 
\begin{enumerate}
    \item Give the simplest possible interpretation of the data with the lowest value of $p$. A related restriction is the memory constraint DORA faces.
    \item The proportion of variation which the $p$ eigenvectors explain should be as large as possible.
\end{enumerate} 

Keeping this in mind, we choose $p=10$. Now we arrange the top $p$ eigenvectors into a projection matrix $W\in \mathbb{R}^{k,p} (k=300,p=10)$, and get the projection of each word embedding $y\in \mathbb{R}^{10}$:
\[
y = W^{'}x \quad\text{where $W^{'}$ is the transpose of $W$}
\]

The semantic units accept only excitatory connections from POs, which means the link weights cannot be negative. Therefore, a max norm is first applied on every (reduced) dimension to make all values lie in the (0,1) range. Then, every word embedding is $L_{2}$ normalised to make the magnitude of every embedding equal to 1. \\

To visualise how words are related in this embedding space, we use Pearson's correlation to calculate the similarity between words. We see the results of the embeddings obtained on $analog_{1}$ in \ref{word_embeddings}. The words have been arranged by syntactic category - Nouns (strangers, cats, etc.,), verbs (purr, training, etc.,), adjectives(big, angry etc., ), etc., to improve any inference that can be drawn from the visualisation. We see that the simple correlation of word embedding shows considerable clustering around syntactic category boundaries both in the Google300b pretrained vectors, and the WordGCN syntactic vectors. This clustering is more pronounced in the WordGCN embeddings (see the lighter colours chunked in the diagonal bands). This means that the word representations have explicit syntactic markers, which as we shall see, helps DORA form better quality mappings. We compare the results obtained on the two models in chapter \ref{part:resultsanddiscussion}.

\begin{figure}
\begin{center}

\subfigure[Google300b]{
    \label{fig1}
    \includegraphics[scale=0.45]{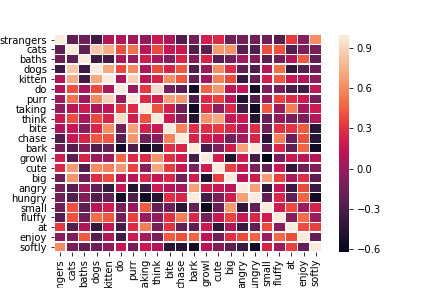}}
\subfigure[WordGCN]{
    \label{fig1}
    \includegraphics[scale=0.45]{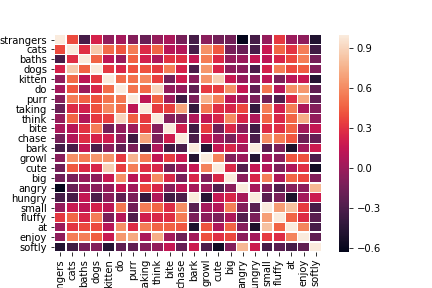}}
\end{center}
\caption{Visualised correlation matrices of word embeddings}
\label{word_embeddings}
\end{figure}

\label{part:resultsanddiscussion}
\chapter{Results and Discussion}
\label{chapter:resultsanddiscussion}

In the first section \ref{section-results}, we investigate DORA's learning, and results of simulations, and connect them to the requirements from the formalisation. Later, we address DORA's limitations, and discuss open questions and paths for the future. 

\section{Results}
\label{section-results}

\subsubsection{Learning}
DORA starts as a blank slate, with all propositions encoded in the LTM. As time progresses, DORA fills its mapping connections by the self-supervised learning algorithm described in \ref{part:body2}. In a single simulation, DORA's retrieval and mapping routines were run for 100 iterations and the network state examined. 10 such simulation runs were conducted for both embedding strategies, and the network evaluated using precision as metric (explained below). The baseline model equally distributes mappings from a given sentence to all sentences. That is, the baseline prediction is given by $b_{i,j}\in B^{n\times n}, n = \text{number of propositions}$:
\begin{equation}
    b_{i,j} = 1.0/n
\end{equation}

To compare the quality of mappings DORA generates, we collect all of its mapping connections into an adjacency matrix $M_{p}^{n\times n}$, where every $m_{i,j}\in M_{p}$ is the mapping from sentence $i$ to $j$. The true mapping matrix is given by $M_{t}^{n\times n}$, where $m_{i,j}\in M_{t}, m_{i,j} = 1 \iff i\neq j$ and $i$ has the same structure as $j$. Precision is the ratio of sum of correct predictions to the total predictions. It is given by the formula :
\begin{equation}
\label{fscore_formula}
\begin{gathered}
    precision = sum(TP)/(sum(TP) + sum(FP)) \\
\end{gathered}
\end{equation}
Remember that we are looking at the predicted adjacency matrix $M_{p}$, and the true adjacency matrix $M_{t}$(square symmetric). That is a binary, multiclass classification, where the prediction lies in the range [0,1]. Here TP (True Positives) is the sum of all elements of $M_{p}$ where $M_{t} = 1$ (Since there exist only positive mapping connections). Therefore, the TP is the sum of all values in $M_{p}$ masked by $M_{t}$. FP (False Positives) are the sum of those values in $M_{p}$ where there should be no mapping connection ie., values where $\neg M_{t}=1$. 

\begin{figure}[h]
    \centering
    \includegraphics{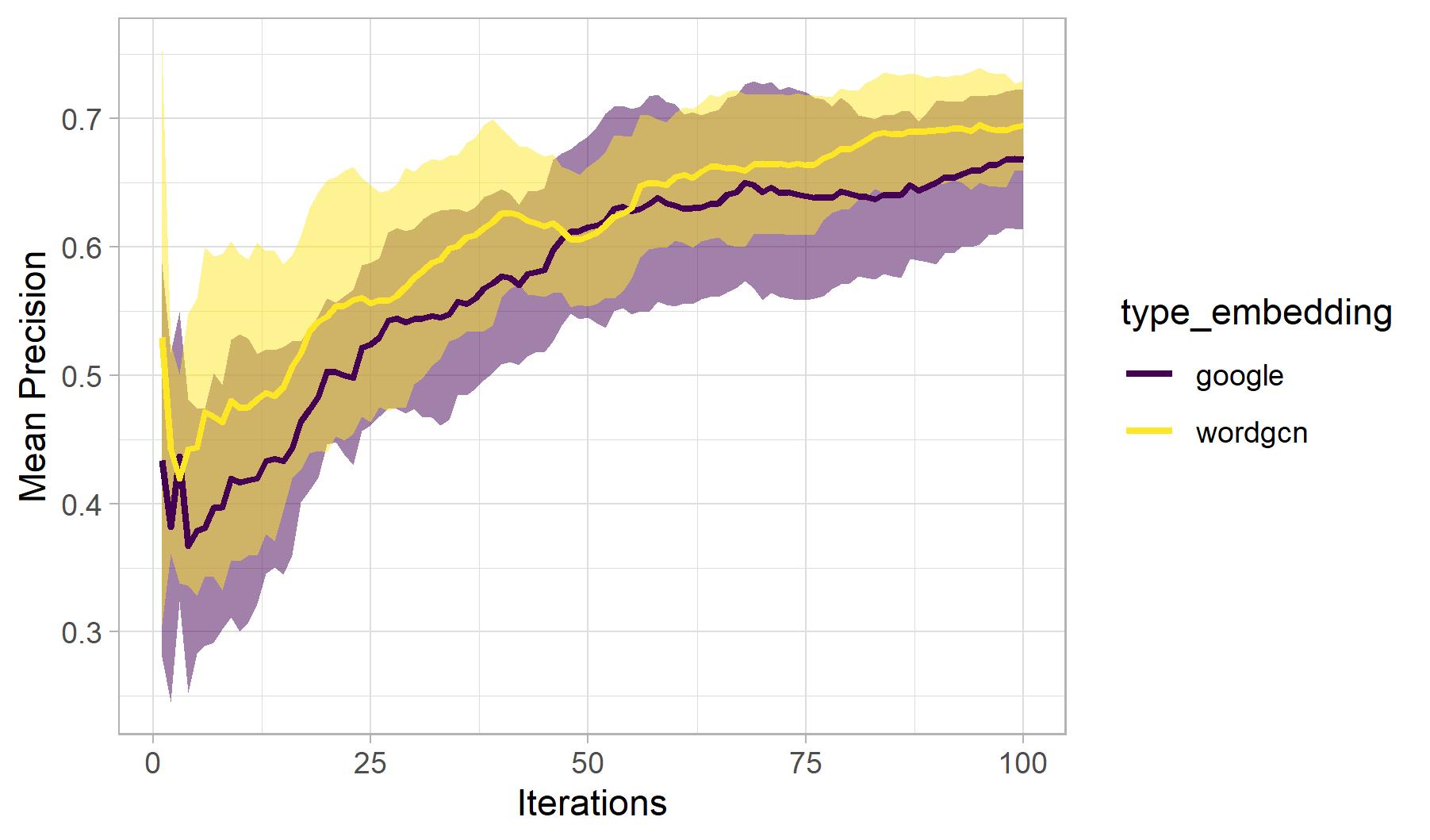}
    \caption{DORA's learning trajectory}
    \label{learning_comparison}
\end{figure}

We see in figure \ref{learning_comparison} how DORA's learning routine performs over iterations. Initially, DORA creates mapping connections that are noisy, resulting in a dip in the precision, after which it starts to make better predictions which also forms a basis for further constraining future mappings.

\begin{figure}[h]
    \centering
    \includegraphics{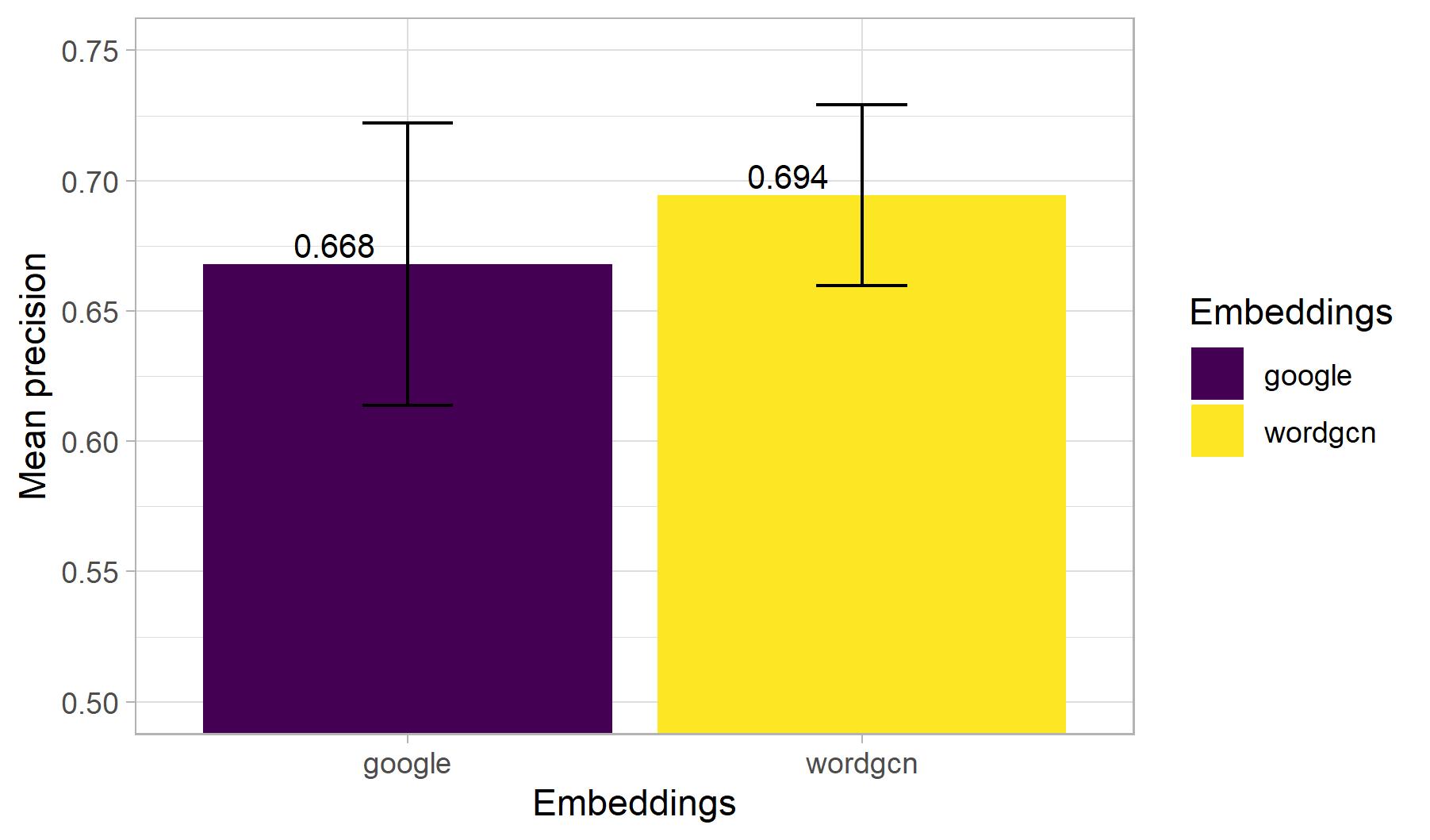}
    \caption{Comparison of embedding choices}
    \label{fscore_comparison}
\end{figure}

Recall that one of the aims was to find out if the quality of word embeddings and implicit markers in these embeddings help DORA to form better mappings. In \ref{fscore_comparison}, we first collect the precision at the end of 100 iterations for all repetitions. These scores are then averaged and plotted along with variances in \ref{fscore_comparison}(Baseline F score - 0.04). The two sample t-test is used to determine if two population means are equal \cite{noauthor_1353_nodate}. Or in other words, if one method is superior to the other. The WordGCN model was found to be significantly better than the google word2vec model (p<.01). Therefore, we conclude that the discovery of underlying structure is aided by the lexical properties of words made explicit in vector representations.

\section{Discussion}
\label{section-discussion}

DORA, originally intended as a model of analogical reasoning, has many properties that make it suitable as a model of language.

\subsection{Reusability of units}
DORA represents objects (in this case words) as explicit units that have distributed semantic representations. These units are bound into higher units, and are not specific to a single proposition. That is, in ``Dogs bite men'', the RB unit encoding ``bite men'' is not limited to only this proposition. It can be further used in any other proposition that contains ``bite men''. This fact is important because other Natural language processing models do not have a way to decompose compositional products like ``bite men'' into `bite' and `men'\cite{martin_tensors_2020}. The network state in these models is a black box from which these intermediate states cannot be identified. Which is of crucial importance given what we have seen from chapter \ref{chapter:body1}. That is, a structure-dependent operation takes in its scope, the explicit representation of compositional elements. 

\subsection{Mapping of compositional structures}

DORA's biggest strength is in the retrievable compositions it encodes, and the possibility of transformation on these structures. This is a direct implementation of the recursive function application on a sentence as in \ref{def_f}.\\

Take for example the sentence ``Majestic male lions roar in anger". In DORA, this sentence is represented internally as $P = (RB_{1}, RB_{2})$, where $RB_{1}$ contains - $Majestic_{p}(P_{child})$, $P_{child} = male(lions)$; $RB_{2}$ contains $roar_{p}(P_{child})$, $P_{child} = in(anger)$. Such a setup becomes increasingly complex when considering past tenses, compound words, etc., This particular structure can be justified since `majestic', which is an adjective, along with `male' act as predicates on the object `lions'. Similarly, `roar' acts on the prepositional phrase `in anger'. Now consider these words (with $P$ present in Driver) presented to DORA. Word representations are activated in the semantic layer, by clamping the specific PO unit's activation to 1. DORA then propagates this activation both upwards into the Driver, and downwards in the Recipient through the semantic layer \ref{DORA-banks}. Therefore, as time progresses, and until the local PO inhibitor fires, DORA passes activation to units further in the hierarchy. At every instant of time, each mapping hypothesis is updated by coactivation of the respective units, and correlations of children \ref{hebbian_learning_equation_modified}. \\

The steps of passing activation (in steps 4.2 through 4.5) correspond to the recursive function definition as in chapter \ref{chapter:body1}, \ref{def_f}. Information from intermediate units propagates upwards and inhibits other units of the same type. The activation from a lower RB/PO unit is reapplied as input to higher RB units. Hebbian learning \ref{hebbian_learning_equation} creates the mapping connections which is the implementation of the canonical inclusion function $i_{i}$. We see what direct activation through mapping units does in \ref{coactivation}. \\

To show how DORA's mapping constrains activations, we compare the network activation profile across time in \ref{coactivation}. Mapping allows for direct lateral transfer of activation between units ie., Structurally similar units become coactive, and proportionally inhibit all other units of the same type. On every iteration in \ref{coactivation}, the activations of all units are first set to zero. All the sentences in the memory are made active, one word at a time. At the end of every iteration, we collect the activations of all units and group them into their types. As we can see, when DORA's mapping connections get better, the inhibitory activity also increases, thereby making fewer units active. The way to interpret this result, is to recognise the increasing efficiency, recognition of structure brings about. See \cite{martin_mechanism_2017} for other interesting phenomena when processing linguistic stimuli with DORA. 

\begin{figure}
    \centering
    \includegraphics{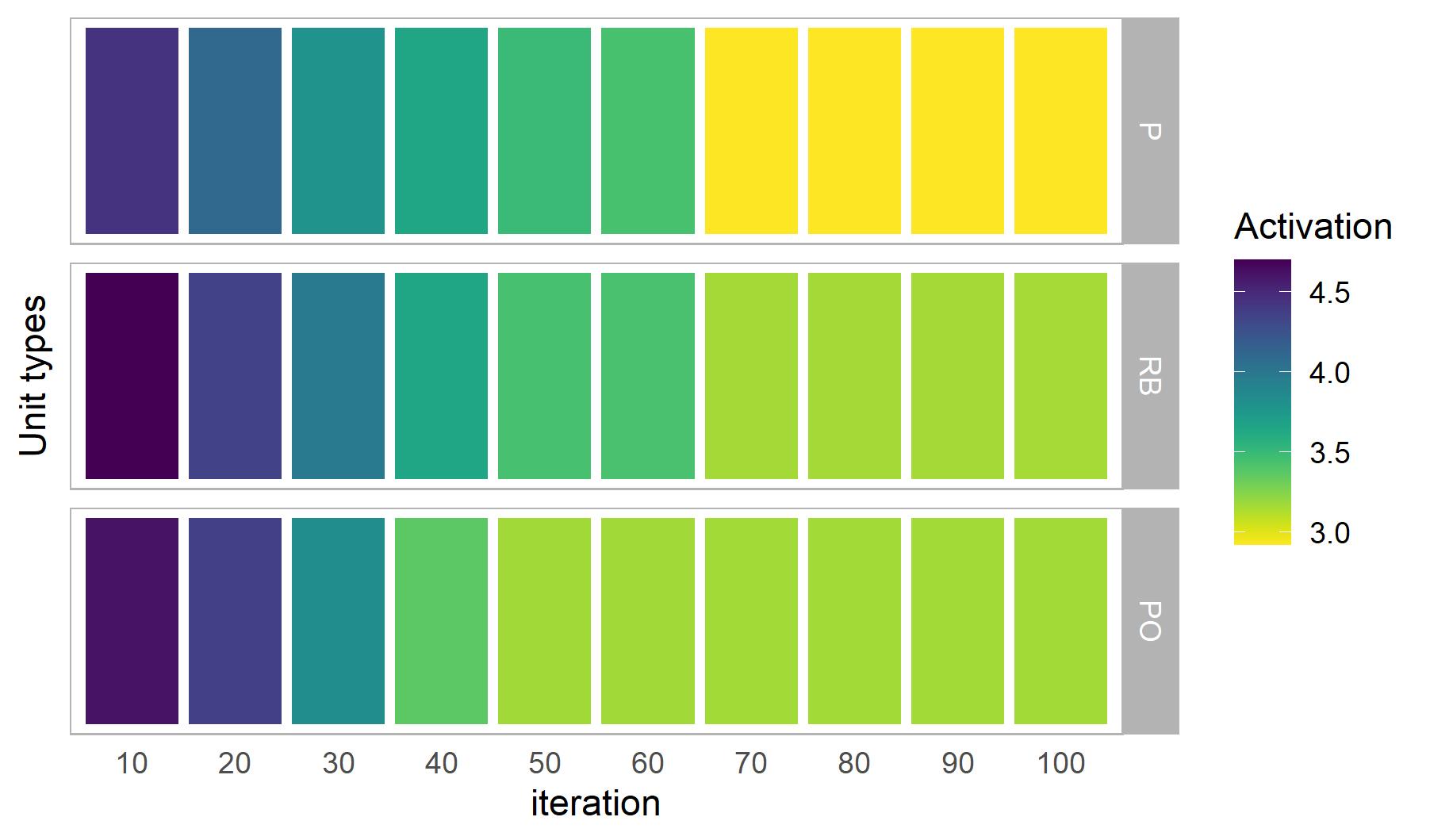}
    \caption{Coactivation leading to reduced summed activation across iterations}
    \label{coactivation}
\end{figure}

\subsubsection{Deep Neural Networks}

In \cite{martin_tensors_2020}, Martin and Doumas argue against tensor product based systems as models of cognitive processing. Consider the difference between structure-based and rigid dependencies as shown in \ref{struct_dependence_1} . The crucial difference between a truly compositional system, and one that approximates the behavior of a compositional system, is in the existence of explicit structural information in its domain. A model that is optimised to reflect the behavioural nature of human language use (translation, description, etc.,) need not necessarily have structure-dependent processing as in \ref{def_f}, but a cognitive model should certainly possess those ingredients. 

\subsection{Grammar induction}

We have seen in the formalisation, the Pushout created by the equivalence relation generated on $L_{i}$. The method of inducing an independent grammar or syntax can only begin once the system has observed the compositional nature of atomic elements. Although DORA possesses a schema induction functionality \cite{doumas_theory_2008}, it is not yet capable of maintaining an independent set of grammatical structures. Top-down imposition of structure on sequences is important in how humans perceive natural language \cite{martin_modelling_2020}. A readily adaptable solution exists in the form of mapping connections that DORA learns. An independent grammar is simply the set of mapping connections which are not bound to their units. 

\subsection{Evidence from neuroscience}

The MEG study in \cite{ding_cortical_2016} showed neural activity that correlated with hierarchically variable linguistic stimuli (syllables, phrases, and sentences). Participants were presented synthesised monosyllabic sequences in Chinese and English, at the rate of four words per second (4Hz). The structure of these sequences was manipulated by presenting - meaningful four-word sentences (``Dry fur rubs skin'', condition 1), two - two-word phrases (``Fat rats new plans'', condition 2), or random four-word sequences (``walk egg nine house'', condition 3). The power spectrum of the neuro-oscillatory output showed power increases at 1Hz, 2Hz, and 4Hz for the three conditions as shown in \ref{ding}. This signal has been attributed to the levels of syntactic structures that are latent in the stimuli. 

\begin{figure}[h]
    \centering
    \includegraphics[scale=0.4]{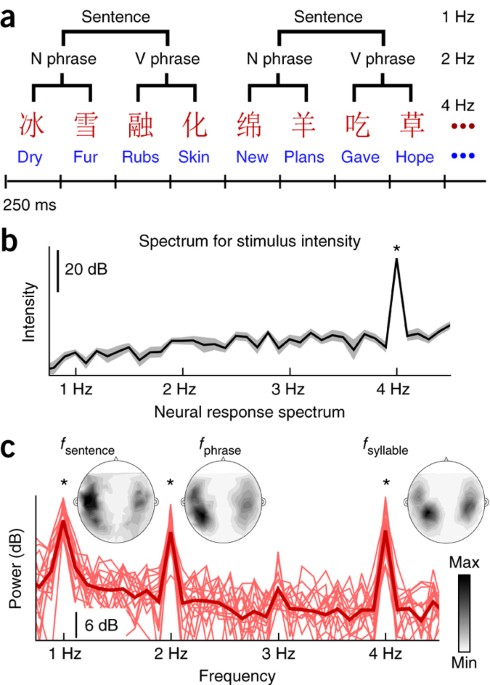}
    \caption{Taken from \cite{ding_cortical_2016}. \textbf{a} is the visualisation of the latent structure in condition 1, whose audio frequency spectrum is shown in \textbf{b}. \textbf{c} shows the neural response for the three conditions.}
    \label{ding}
\end{figure}

The formalisation in \ref{chapter:body1} makes room for this observation. In \ref{chapter:body1} and in \ref{detailed_1}, we see how every sequence can be mapped to $\mathbb{P}(L_{i})$. In this case, setting $i=2,3$ and with four words per second, we should see one element in $p^{*}_{3}$ appearing as an image on the application of $f_{3}(s)$, when $s$ is a sequence in the first condition, corresponding to the 1Hz signal. No such element exists when $s$ is two phrases strung together (second condition), but two $p^{*}_{2}$ images are seen for the phrase condition - corresponding to the 2Hz signal. As for the third condition, the words do not group together at all, therefore the frequency response of the stimuli is reflected in the brain activity. See \cite{martin_mechanism_2017} for details on how DORA displays similar activity when presented with the same stimuli. 

\subsection{Limtations of DORA}
DORA suffers from many of the problems faced by classical symbolic models. Here we discuss the issues DORA faces, and possible paths ahead - 
\begin{enumerate}
    \item Propagating activations through the network is a time consuming affair, with every unit updated individually, and not in parallel. Such a system can benefit tremendously by a matrix update approach. Such a method will also reduce unnecessary information made redundant by repeated creation. For example, every PO/RB/P unit (instantiated as an object in an Object-Oriented Programming paradigm) has a separate attribute inhibitor threshold, although every unit of a single kind has the same threshold. This redundancy can be overcome by taking an array-based approach. 
    \item A literature survey did not reveal pretrained topical word embedding models. That is, embeddings grounded in semantically interpretable dimensions. DORA's operations would benefit greatly from such an encoding, making compositions more `meaningful'.
    \item Occasionally, DORA makes errors in mapping. These incorrect mappings act as positive feedback which further constrains future activations. The SSL algorithm needs a more robust learning routine to overcome this issue.
    \item From our formalisation, we see that grammar is simply a restriction imposed on possible compositional structures that a linear sequence affords. With this in mind, DORA's SSL algorithm must be tweaked so that it becomes a Partial SSL method. That is, the separation of grammar can only take place with valid/invalid labels for compositional structures.  
    \item DORA is a complicated model, and understanding its processing is not particularly straightforward. More can be done to bring out a stable version which the open-source community can work on.
    \item Currently, the formalisation assumes a purely bottom-up, perceptual system that builds evidence for membership in a particular equivalence class. The recursive computation of merge products as described, is costly. Therefore $\mathbb{H}$ takes on added significance because of its predictive capacity. That is to say, whatever elements of $\mathbb{H}$ are known, are used as bootstrapping to constrain any further perceptual learning.
    \item DORA is at present incapable of learning predicates in natural language sentences. This is, of course, a non-trivial problem, and an area of active research. Consider why this is so problematic - Presented with a sequence of words, the model must be able to place every word at some level in the (unknown a-priori) hierarchical representation. This conclusion need not point to regression towards fully symbolic systems. A possible alternative is a statistically modified DORA, with the ability to maintain `active' representations of symbols instead of passive, fixed ones. Notice also that we do not know the length of the sequence (nor its contents) before processing. The model must rely on the memory of what it has previously seen, to determine the current structure. This suggests a direct inspiration from Recurrent Neural Networks which do a very good job of representing states of linear processing. Using state information to actively build structure is therefore a promising path going forward.
\end{enumerate}

\part*{Appendix}
\addcontentsline{toc}{part}{Appendix}

\appendix 

\chapter{Detailed Descriptions}
\label{chapter:DetailedDescriptions}\label{appendix}
\section{Overall processing}\label{overall_proc}
The intention behind \ref{detailed_1}, is to look in detail at two sentences processed by the formal model described in chapter \ref{chapter:body1}:
\begin{enumerate}
    \item \label{sent_1}Big dogs bite men
    \item \label{sent_2}Men say men bite men
\end{enumerate}

We see the two sentences in $S$ at the topmost part of \ref{detailed_1}. The two sentences and their corresponding images are distinguished by the colours of their bounding boxes. \\

From these two sentences, $f_{2}$ maps s\ref{sent_1}, s\ref{sent_2} to the elements $a,b \in \mathbb{P}(L_{2}) \times \{0,1\}$ , where $a=((big dogs,bite men), 0)$, $b=((bite men),0)$, where the $0$ is indicative of the absence of the topmost syntactic operation. That is, ``big dogs'' and ``bite men'' are still distinct phrases at the $L_{2}$ level, and have not resulted in a single object. The exact implementational details of how $f_{2}$ does this is underspecified at the formal level, and shown in chapter \ref{chapter:body2} at the algorithmic level, keeping in mind the possibility of multiple realisability. (How such a function can be learnt is a matter of active research, and the description of processing at the algorithmic level is the first step towards that goal). In $f_{3}:f_{3}(f_{2}(s),f_{1}(s),s)$, the combination of  ``big dogs'' and ``bite men'' occurs, and s\ref{sent_1} is mapped to (big dogs bite men, 1), where $1$ indicates the presence of the topmost syntactic operation. Similarly, for s\ref{sent_2}, the same recursive functional mapping occurs, where each image in a $P^{*}_{i}$ is the result of previous merge-operations. Therefore, the topmost syntactic operation is reached in $P^{*}_{5}$ for s\ref{sent_2}. \\

$\pi_{i}$ is a compression mechanism, which abstracts the elements of $P^{*}_{i}$ to $T^{*}_{i}$. The labelling of compositional elements uses lexical information along with time to perform indexing on the elements of $T^{*}_{i}$. This abstract representation allows multiple distinct sequences to have identical $T^{*}_{i}$ images. Take for example two sequences s\ref{sent_1} and ``Big men bite dogs''. Although the semantic information carried in $P^{*}_{i}$ is distinct - ``Big dogs'' vs ``Big men'' and ``bite men'' vs ``bite dogs'', their structural similarity is maintained because $\pi_{2}$ maps them to the same element - $(2,4,NP,VP,0)$. That is to say, the NP and VP end at positions 2,4 respectively. One can claim that this element alone is sufficient to determine grammaticality, but it is the topmost operation which binds the NP and VP is what allows the sentence to be called so. And therefore, until the topmost merge operation does not occur, processing cannot be considered complete.\\

The $i_{i}$ inclusion functions map the elements in $T^{*}_{i}$ to the Pushout generated by the equivalence relation. The pushout then is the discovery of latent structure. This is shown by the grouping of similarly coloured circles to represent equivalence classes. The universal property holds that there is a mapping $Q\to H$. Notice that there are dotted yellow and orange lines going from $T^{*}_{3}$ and $T^{*}_{5}$ to $\mathbb{H}$. Only these two elements, (which are topmost root constituents) have unique corresponding syntax trees in $\mathbb{H}$, therefore making $h_{i}$ a partial function. This is the crux of the statement - $f_{i}\circ \pi_{i}\circ h_{i} = f_{i}\circ \pi_{i}\circ i_{i} \circ (Q\to H)$.\\ 

Every sentence, when built up, requires the application of the topmost syntactic operation for membership in the syntax tree. But this is not to say that elements of other level sets do not have any significance. On the contrary, they gather evidence at every time step by claiming membership in an equivalence class in the Pushout. 

\begin{sidewaysfigure}
    \centering
    \includegraphics[scale=0.6]{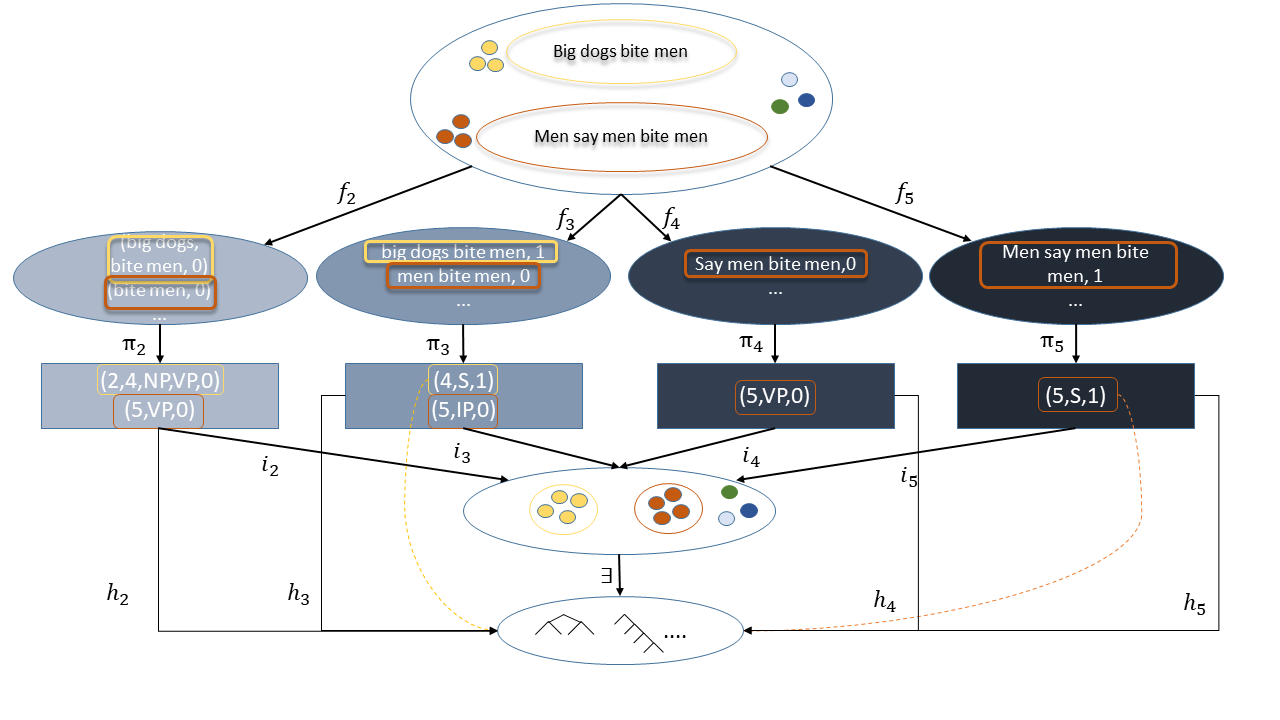}
    \caption{Detailed diagram for sample sentences}
    \label{detailed_1}
\end{sidewaysfigure}

\section{Binding by Asynchrony}\label{binding_by_asynchrony}

As shown in figure \ref{illustration_dora_binding}, DORA shows binding by asynchrony. When this asynchrony is maintained at the level of PO units, unit coding for the predicate fires right before the unit coding for the object, and together, the RB unit fires out of synchrony with other units. Activation is shown by grey units. The binding of units larger to cup is carried by the firing of unit representing larger (in \ref{fig1} i), followed by unit representing cup (in \ref{fig1} ii). To make this binding phenomenon more robust, it is necessary that a given RB unit fire more than once. That is, if (i),(ii),(iii) and (iv) are considered four time steps, then the activation profile in \ref{fig2} shows DORA repeating these four time steps twice ie., i, ii,iii,iv - i,ii,iii,iv. The yoked inhibitor serves to ensure that once the activation of the PO/RB/P unit goes over a threshold, it is brought down to zero.

\begin{figure}
\begin{center}
\subfigure[Binding by asynchrony]{
    \label{fig1}
    \includegraphics[scale=0.6]{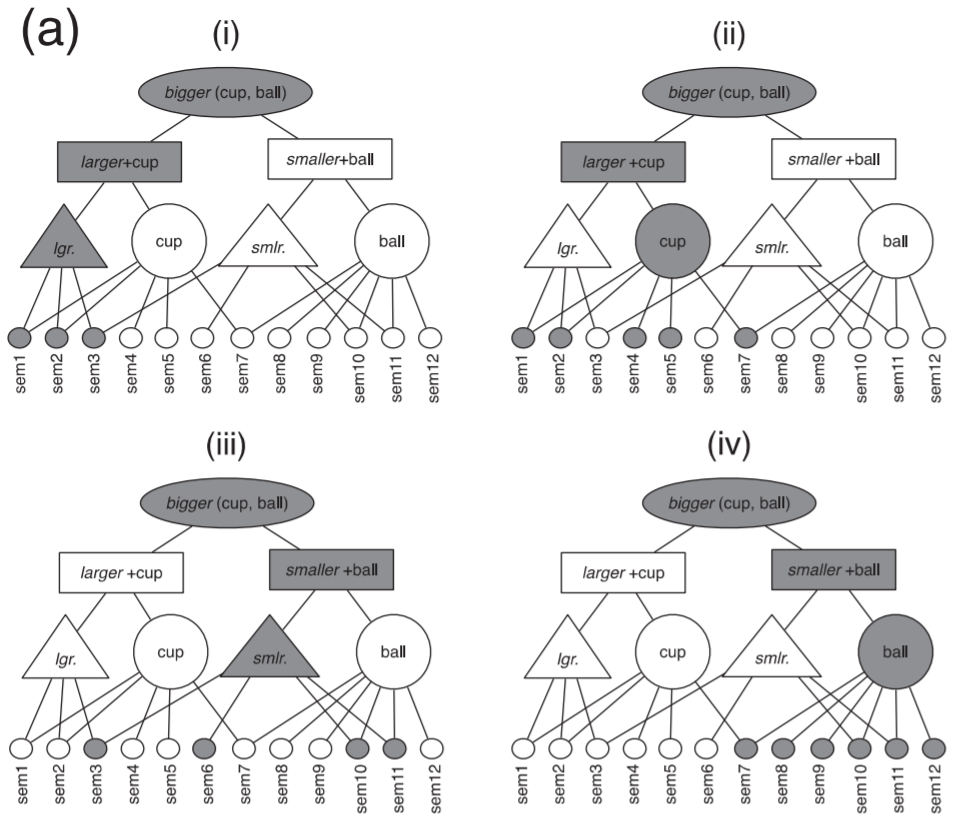}}
\subfigure[Activation profiles of units]{
    \label{fig2}
    \includegraphics[scale=0.6]{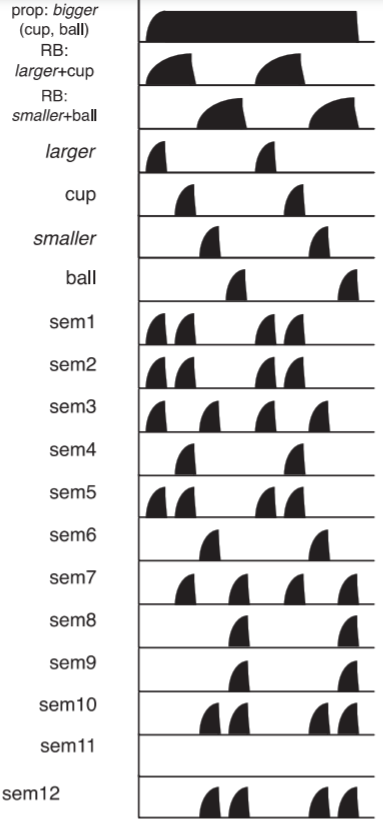}}
\end{center}
\caption{Binding by asynchrony example - from \cite{doumas_learning_2018}}
\label{illustration_dora_binding}
\end{figure}

 \clearemptydoublepage

 \printglossaries

 \addcontentsline{toc}{chapter}{Bibliography}
 \bibliography{bibliography/export-data.bib}

\end{document}